\theoremstyle{plain}
\newtheorem{thm}{\protect\theoremname}
\theoremstyle{plain}
\theoremstyle{remark}
\theoremstyle{lemma}
\newtheorem{lem}[thm]{\protect\lemmaname}
\providecommand{\corollaryname}{Corollary}
\providecommand{\lemmaname}{Lemma}
\providecommand{\remarkname}{Remark}
\providecommand{\theoremname}{Theorem}
\providecommand{\lemmaname}{Lemma}
\def\R{\mathbb{R}}
 \author{Qi Deng\\
 Shanghai University of Finance and Economics\\
 \texttt{qideng@sufe.edu.cn}
 \and
 Yi Cheng \\
  Georgia Institute of Technology\\
 \texttt{cheng.yi@gatech.edu}
 \and
 Guanghui Lan\\
  Georgia Institute of Technology\\
 \texttt{george.lan@isye.gatech.edu}
 }
\begin{document}

\title{Optimal Adaptive and Accelerated Stochastic Gradient Descent}

\maketitle
\begin{abstract}
Stochastic gradient descent (\textsc{Sgd}) methods are
the most powerful optimization tools in training machine learning and deep
learning models. Moreover, acceleration (a.k.a. momentum)  methods and 
diagonal scaling (a.k.a. adaptive gradient) methods are the two main techniques to improve the 
slow convergence of \textsc{Sgd}. While empirical studies have demonstrated 
potential advantages of combining these two techniques, it remains unknown whether these methods can achieve the optimal rate of convergence for stochastic optimization.
In this paper, we present a new class of adaptive and accelerated stochastic gradient descent methods
and show that they exhibit the optimal sampling and iteration complexity for stochastic optimization.
More specifically, we show that diagonal scaling, initially designed to improve vanilla stochastic
gradient, can be incorporated into accelerated stochastic gradient descent to achieve the optimal
rate of convergence for smooth stochastic optimization. We also show that 
momentum, 
apart from being known to speed up the convergence rate of deterministic optimization, 
also provides us new ways of designing non-uniform and aggressive moving average schemes in stochastic optimization. 
Finally, we present some heuristics that help to implement adaptive accelerated stochastic
gradient descent methods and to further improve their practical 
performance for machine learning and deep learning. 
\end{abstract}

\section{Introduction}
In this paper we are interested in solving the following optimization
problem:
\begin{align}
\min_{x}&\quad f(x):=\mathbb{E}[F(x,\xi)] \nonumber \\
\text{s.t.}&\quad x \in \mathcal{X} \label{ilp}
\end{align}
where $\mathcal{X}$ is a closed convex set in $\R^d$, $F(\cdot,\xi):  \mathcal{X} \to \R$ is a continuously differentiable function
and $\xi \in \Xi \subset \R^p$ is a random variable.

The stochastic gradient descent method is obtained from gradient descent by replacing the
exact gradient $g(x)=\nabla f(x)$ with a stochastic gradient $G(x,\xi)=\partial_{x}F(x,\xi)$,
where $\xi$ is random samples. \textsc{Sgd} was initially presented in a seminal work by Robbins and Monro~\citep{RN218}
and later significantly improved in~\citep{RN177,RN337,RN202,RN176} through the incorporation of averaging and
adaption to problem geometry. 
\textsc{Sgd} methods  are now the
de-facto techniques to tackle the optimization problems for training large-scale machine learning and deep learning models. 

While momentum methods, first pioneered
by Polyak (\citep{RN316}) and later significantly improved in Nesterov's work \cite{RN184,RN185},
are well-known techniques to accelerate deterministic smooth convex optimization, only recently have these techniques
been incorporated into stochastic optimization starting from \cite{RN140,RN339}.
Under convex assumptions,
Nesterov's method and its stochastic counterparts have been shown to exhibit the optimal 
convergence rates for solving different classes of problems. 
These methods have also been recently generalized for solving nonconvex and stochastic optimization problems \cite{RN97}.
\cite{RN312} demonstrated the importance of momentum
and advantage of Nesterov's accelerated gradient method in training
deep neural nets. Recent works \cite{RN309,RN307} propose more robust accelerated \textsc{Sgd} with improved statistical error. 
In addition, many studies investigate convergence issue
for nonconvex optimization, including the theoretical
concern on the convergence to saddle point and how to escape from that.
\cite{RN322,RN333} show that gradient descent in general converges
to local optima. By utilizing second-order information, one can obtain
improved rate of convergence to approximate local minima. This includes
approaches based on Nesterov and Polyak's cubic regularization \cite{RN334,RN326,RN332},
or first-order method with accelerated gradient method as a sub-solver for escaping saddle
points \cite{RN323}.

In a different line of research, adaptive stepsizes to each decision variable
have been used to incorporate information about geometry of data more
efficiently. Built upon mirror-descent~\cite{RN176,RN177} and diagonal scaling,  
the earlier work~\cite{RN78} proposes \textsc{AdaGrad},
an adaptive subgradient method that has been widely applied to online
and stochastic optimization. Later \cite{RN313,RN308,RN314,RN321,RN320}
exploit adaptive \textsc{Sgd}, referred to as \textsc{Adam}, for optimizing deep neural networks. Their studies
suggest that exponential moving average, which leans towards the later
iterates, often outperform \textsc{AdaGrad}, which weighs the iterates
equally. The work~\cite{RN314} empirically combines accelerated gradient with \textsc{Adam}. 
However, there is no theoretical analysis and the interplay between momentum and
adaptive stepsize remains unclear.
In addition, the theoretical insight of exponential moving average
is not well understood. In particular, recent work~\cite{RN308} addresses
the non-convergent issue of \textsc{Adam}, a popular exponential averaging
\textsc{Sgd}, and proposes a modified averaging scheme with guaranteed convergence.
The work \cite{RN336} questions the generalization ability of adaptive
methods compared to \textsc{Sgd}. Later \cite{RN335} develops a strategy to
switch from \textsc{Adam} to \textsc{Sgd} for enhancing generalization performance while
retaining the fast convergence on the initial phase. 

Our goal in this paper is to develop a new accelerated stochastic
gradient descent, namely \textsc{A2Grad}, by showing in theory how adaptive stepsizes should
be chosen in the framework of accelerated methods. Our main contribution are as follows:
\begin{enumerate}
  \item A clean complexity view of adaptive momentum methods. We show that adaptive stepsizes can improve the convergence of stochastic component while Nesterov's accelerated method improves the deterministic and smooth part. In particular, \textsc{A2Grad} not only achieves the optimal $\mathcal{O}(\frac{L}{k^{2}}+\frac{\sigma}{\sqrt{k}})$
worst-case rate of convergence  for stochastic optimization
as shown in \cite{RN140}, but can also improve significantly the second term by adapting to 
data geometry. Here $L$ denotes the Lipschitz constant for the gradient $\nabla f$, 
$\sigma$ denotes the variance of stochastic gradient and $k$ denotes the iteration number. 
  \item Our paper provides a more general framework to analyze adaptive accelerated methods---not only including \textsc{AdaGrad}-type moving average, but also providing new ones---with unified theory. 
We show that momentum also guide us to design incremental and adaptive stepsizes that beyond the choices in existing work. One can choose between uniform average and nonuniform average up to quadratic weights in the adaptive stepsize while retaining the optimal rate.
\item  Our theory also includes exponential moving average as a special case and for the first time, has obtained the nearly-optimal convergence under sub-Gaussian assumption. The rate can be further improved to optimal under assumption of bounded gradient norm.
\end{enumerate}

The rest of the paper proceeds as follows. Section \ref{sec:sgd}
introduces the accelerated \textsc{Sgd} and adaptive \textsc{Sgd}. Section \ref{sec:Adaptive-ASGD}
states our main contribution which presents a new \textsc{A2Grad} framework of developing
accelerated \textsc{Sgd}. Section \ref{sec:diag} discusses several adaptive gradient strategies. Section \ref{sec:Experiments} conducts experiments
to the empirical advantage of our approach. Finally we draw several conclusions in Section \ref{sec:Conclusion}.

\paragraph{Notation.} For a vector $x\in \R^d$, we use $x^2$ to denote element-wise square and $\sqrt{x}$ to denote the element-wise square root. For vectors $x$ and  $y$, $xy$ and $\frac{x}{y}$ denote the element-wise multiplication and division respectively. 
With slight abuse of notation, $x_i$ denotes the $i$-th coordinate of vector $x$, and $x_{k, i} $ denotes the $i$-th coordinate of vector $x_k$. For a sequence of vectors $x_k, x_{k+1},...,x_t$ where $k\le t$, we use $x_{k:t,i}$ to denote the vector $[x_{k,i},x_{k+1,i},...,x_{t,i}]^T$ .

\section{Stochastic gradient descent\label{sec:sgd}}
The last few years has witnessed the great success of \textsc{Sgd} type methods
in the field
of optimization and machine learning. For many learning tasks,
\textsc{Sgd} converges slowly and momentum method improves
\textsc{Sgd} by adding inertia of the iterates to accelerate the optimization
convergence. However, classical momentum method does not guarantee
optimal convergence, except for some special cases. Under the convex
assumption, Nesterov developed the celebrated accelerated gradient (NAG) method
 which achieves the optimal rate of convergence.
By presenting a unified analysis for smooth and stochastic optimization, and 
carefully designing the stepsize policy, \cite{RN140} studies a variant
of Nesterov's accelerated gradient method that exhibits the optimal convergence
for convex stochastic optimization. The optimal accelerated stochastic approximation method
in  \cite{RN140} converges at
the rate of $\mathcal{O}(\frac{L}{k^{2}}+\frac{\sigma}{\sqrt{k}})$,
much better than vanilla \textsc{Sgd} with a rate of $\mathcal{O}(\frac{L}{k}+\frac{\sigma}{\sqrt{k}})$
for solving ill-conditioned problems with large constant $L$. 
Later work \cite{RN312} investigates the performance of \textsc{Sgd} on training deep neural networks, and demonstrates the
importance of momentum, in particular, the advantage of Nesterov's acceleration in
training deep neural networks.

%


\begin{algorithm}[h]
\KwIn{$x_0$, $v_0$}
\For{k=0,1,2,...,K}{
Obtain sample $\xi_k$ and compute $G_k\in \nabla F(x_k,\xi_k)$\;
$v_{k} = \psi(G^2_0, G^2_1, G^2_2, ..., G^2_k)$\;
$\tilde{G}_{k} = \phi(G_0, G_1, G_2, ...,G_k)$\;
$x_{k+1}  =x_{k} - \beta_{k} \tilde{G}_k / \sqrt{v_k}$\;
}
\KwOut{$x_{K+1}$}
\caption{Adaptive method\label{alg:adaptive}}
\end{algorithm}

However, since both \textsc{Sgd} and momentum methods only depend
on a single set of parameters for tuning all the coordinates, they can be 
inefficient for ill-posed and high dimensional problems. 
To overcome these
shortcomings, adaptive methods perform diagonal scaling on
the gradient by incorporating the second moment information. 
We describe a general framework of adaptive gradient in Algorithm \ref{alg:adaptive}, where $\psi$ and $\phi$ are some weighing functions. 
For example, the original \textsc{ AdaGrad} (\cite{RN78}) takes adaptive stepsize in the following form:
\begin{align*}
v_{k} & =v_{k-1}+G_{k}^{2}\\
x_{k+1} & =x_{k}-\frac{\beta_{k}}{\sqrt{v_{k}}}G_k.
\end{align*} 
In deep learning, the optimization landscape can be very different
from that of convex learning models. It seems to be intuitive to put
higher weights on the later iterates in the optimization trajectory. 
Consequently,  adaptive methods including \textsc{Rmsprop}\cite{RN320},
\textsc{Adadelta} \cite{RN313}, \textsc{Adam} \cite{RN321} and several others propose more practical adaptive methods with slightly different schemes to exponentially average the past second moment.

Both momentum and adaptive gradient improve the performance of \textsc{Sgd} and quickly become popular in machine learning. 
However, these two techniques seem to improve \textsc{Sgd} on some orthogonal directions. 
Momentum
aims at changing the global dynamics of learning system while adaptive gradient
aims at making \textsc{Sgd} more adaptive to the local geometry. 
The convergence of adaptive gradient is first established for  \textsc{AdaGrad} and later extended to adaptive momentum methods with exponential moving average.
However the proof of all these work seems to be based on regret analysis and subgradient methods. When applying momentum based adaptive gradients for smooth optimization, the theory only guarantees a convergence rate of $O(\frac{1}{\sqrt{k}})$, much worse than the 
$O(\frac{1}{k^2})$ achieved by other accelerated methods.
Thus it is natural to ask the question:  \emph{Can momentum truly help improve the convergence of adaptive gradient methods?}

\section{Adaptive ASGD\label{sec:Adaptive-ASGD}}
To answer this question, we will base our study on accelerated gradient method using Nesterov's momentum.
Our main goal in this section is to develop a novel accelerated stochastic gradient
method by combining the advantage of Nesterov's accelerated gradient
method and adaptive gradient method with stepsize tuned per parameter.
We will discuss the interplay of these two strategies and develop new moving average
schemes to further accelerate convergence.

We begin with several notations. Define
$D_{\phi}(y, x)=\phi(x)-\phi(y)-\left\langle \nabla\phi(y),x-y\right\rangle $
as a Bregman divergence where the  convex smooth function $\phi(\cdot)$ is called a proximal function or distance generating function in the literature of mirror descent. We assume that $\phi(x)$ is 1-strongly convex with respect to the semi-norm $\|\cdot \|_{\phi}$, or equivalently, $D_\phi(x,y) \ge \frac{1}{2} \|x-y\|_\phi^2$. 
Also let $\|\cdot \|_{\phi*}$ be the dual norm associated with $\|\cdot \|_{\phi}$.  
Many popular optimization algorithms, such as gradient descent, adopt $D_\phi(x,y)=\frac{1}{2}\|x-y\|^2$, simply by setting proximal function according to the Euclidean norm: $\phi(x)=\frac{1}{2}\|x\|^2$.
In contrast, adaptive gradient methods apply data-driven $\phi_k(x)$ $(k\ge0)$ which is adaptive to the past (sub)gradient.

 In Algorithm \ref{alg:a2grad}, we propose
a general algorithmic framework: adaptive accelerated stochastic gradient (\textsc{A2Grad})
method, of which the main idea is to apply extra adaptive proximal function in the process of accelerated gradient methods while retaining the optimal convergence imposed by Nesterov's momentum.
Examining step (\ref{eq:a2grad-2}) in the algorithm, the major
difference between \textsc{A2Grad} and the other stochastic accelerated gradient descent 
exists in that the new update couples the two Bregman distance $D(x_{k},x)$
and $D_{\phi_k}(x_{k},x)$, controlled by stepsize $\gamma_{k}$ and
$\beta_{k}$ respectively. The first one $D(y,x)$ is short for $D_\psi(y, x)$, where $\psi$ can be any fixed convex proximal function during the optimization iterations. 
For practical purpose, in this paper we will use $\psi(x)=\frac{1}{2}\|x\|^2$ and then skip the subscript for the sake of simplicity.
On the other hand, $D_{\phi_k}(x_{k},x)$ uses a variable proximal function, allowing gradual adjustment when algorithm proceeds. 
As a consequence, the mirror descent update is a joint work of both Nesterov's momentum and adaptive gradient.
Although the view of two Bregman divergence seems straightforward, it provide us with a clearer vision on the interaction between Nesterov's term and adaptive gradient. 
As we shall see later, our framework leads to novel  adaptive moving average with sharper theoretical guarantee.

In the following theorem, we provide the first main result on the general convergence property of \textsc{A2Grad}.  The proof of the theorem is left in Appendix for brevity.
\begin{algorithm}[h]
\KwIn{ $x_{0}$, $\bar{x}_{0}$, $\gamma_k ,\beta_k >0$} 
\For{$k=0,1,2\ldots K$}{
Update:
\begin{equation}
  \underline{x}_{k} =(1-\alpha_{k})\bar{x}_{k}+\alpha_{k}x_{k}\label{eq:a2grad-1}
\end{equation}
Sample $\xi_{k}$, compute $\underline{G}_{k}\in\nabla F(\underline{x}_{k},\xi_{k})$
and $\phi_{k}(\cdot)$, then update:

\begin{align}
x_{k+1} & =\argmin_{x \in \mathcal{X}}\left\{ \left\langle \underline{G}_{k},x\right\rangle +\gamma_{k}D(x_{k},x)+\beta_{k}D_{\phi_{k}}(x_{k},x)\right\} \label{eq:a2grad-2}\\
\bar{x}_{k+1} & =(1-\alpha_{k})\bar{x}_{k}+\alpha_{k}x_{k+1}\label{eq:a2grad-3}
\end{align}

}
\KwOut{$\bar{x}_{K+1}$}\caption{Adaptive accelerated stochastic gradient (\textsc{A2Grad}) algorithm\label{alg:a2grad}}
\end{algorithm}

\begin{restatable}{thm_re}{main}
\label{thm:main-thm} 
In Algorithm \ref{alg:a2grad}, if $f$ is convex and Lipschitz smooth with constant $L$, and $\{\alpha_{k}\}$ and
$\{\gamma_{k}\}$ satisfy the following conditions: 
\begin{align}
L\alpha_{k} & \le\gamma_{k},\label{eq:alpha_gamma_bound}\\
\lambda_{k+1}\alpha_{k+1}\gamma_{k+1} & \le\lambda_{k}\alpha_{k}\gamma_{k},\label{eq:alpha_gamma_mono}
\end{align}
where the sequence $\{\lambda_k\}$ is defined by 
$\lambda_{k}=1/\prod_{i=1}^{k}(1-\alpha_{i}),k=1,2,3...,\; \text{and\ }\lambda_{0}=1, $
then 
\begin{dmath}
\lambda_{K}\left[f(\bar{x}_{K+1})-f(x)\right] \le(1-\alpha_{0})\left[f(\bar{x}_{0})-f(x)\right]
+\alpha_{0}\gamma_{0}D(x_{0},x)+\sum_{k=0}^{K}\lambda_{k}[\frac{\alpha_{k}\|\delta_{k}\|_{\phi_{k}*}^{2}}{2\beta_{k}} 
+\alpha_{k}\left\langle \delta_{k},x-x_{k}\right\rangle +\alpha_{k}R_{k}],
\label{main-recur}
\end{dmath}
\normalsize
where $\delta_{k}=\underline{G}_k -\nabla f(x_k)$ and $R_{k}=\beta_{k}D_{\phi_k}(x_{k},x)-\beta_{k}D_{\phi_k}(x_{k+1},x).$ 
\end{restatable}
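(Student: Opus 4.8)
The plan is to derive a one-step recursion for the scaled optimality gap $\lambda_k[f(\bar{x}_{k+1}) - f(x)]$ and then telescope it with the weights $\lambda_k$. First I would exploit the identity $\bar{x}_{k+1} - \underline{x}_k = \alpha_k(x_{k+1} - x_k)$, which follows by subtracting (\ref{eq:a2grad-1}) from (\ref{eq:a2grad-3}), and feed it into the $L$-smoothness inequality $f(\bar{x}_{k+1}) \le f(\underline{x}_k) + \langle \nabla f(\underline{x}_k), \bar{x}_{k+1} - \underline{x}_k\rangle + \frac{L}{2}\|\bar{x}_{k+1} - \underline{x}_k\|^2$. Writing $\underline{G}_k = \nabla f(\underline{x}_k) + \delta_k$, with $\delta_k$ the stochastic gradient error, this turns the linearized progress into $\alpha_k\langle \underline{G}_k, x_{k+1} - x_k\rangle$ minus a noise cross term $\alpha_k\langle\delta_k, x_{k+1}-x_k\rangle$, plus the quadratic $\frac{L\alpha_k^2}{2}\|x_{k+1}-x_k\|^2$.

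The heart of the argument is to control $\langle \underline{G}_k, x_{k+1} - x_k\rangle$ by splitting it as $\langle\underline{G}_k, x_{k+1}-x\rangle + \langle\underline{G}_k, x-x_k\rangle$ for an arbitrary comparator $x$. For the first piece I would invoke the first-order optimality condition of the prox subproblem (\ref{eq:a2grad-2}) together with the three-point identity for Bregman divergences, yielding $\langle\underline{G}_k, x_{k+1}-x\rangle \le \gamma_k[D(x_k,x) - D(x_k,x_{k+1}) - D(x_{k+1},x)] + \beta_k[D_{\phi_k}(x_k,x) - D_{\phi_k}(x_k,x_{k+1}) - D_{\phi_k}(x_{k+1},x)]$. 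For the second piece I would again substitute $\underline{G}_k = \nabla f(\underline{x}_k) + \delta_k$, decompose $\alpha_k(x-x_k) = \alpha_k(x - \underline{x}_k) + (1-\alpha_k)(\bar{x}_k - \underline{x}_k)$ using (\ref{eq:a2grad-1}), and apply convexity of $f$ at $\underline{x}_k$ twice; the $f(\underline{x}_k)$ contributions then cancel exactly and leave $\alpha_k f(x) + (1-\alpha_k)f(\bar{x}_k)$, which is the convex-combination structure that drives acceleration.

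The main obstacle, and the conceptual crux, is to absorb the two residual terms using the negative Bregman terms produced by the prox inequality, and to do so with the correct pairing. I would combine $-\alpha_k\gamma_k D(x_k,x_{k+1})$ with $\frac{L\alpha_k^2}{2}\|x_{k+1}-x_k\|^2$; since $D(x_k,x_{k+1}) = \frac{1}{2}\|x_{k+1}-x_k\|^2$ for the Euclidean $\psi$, condition (\ref{eq:alpha_gamma_bound}) makes this sum nonpositive. Separately I would combine the noise cross term $-\alpha_k\langle\delta_k, x_{k+1}-x_k\rangle$ with $-\alpha_k\beta_k D_{\phi_k}(x_k,x_{k+1})$; using $1$-strong convexity of $\phi_k$ in $\|\cdot\|_{\phi_k}$ and Young's inequality in the dual pair $(\|\cdot\|_{\phi_k}, \|\cdot\|_{\phi_k*})$, this is bounded by $\frac{\alpha_k\|\delta_k\|_{\phi_k*}^2}{2\beta_k}$. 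Arranging the smooth part to be damped by the fixed divergence $D$ while the stochastic part is damped by the adaptive divergence $D_{\phi_k}$ is exactly the separation that lets acceleration and adaptivity coexist, and it is the step that must be set up carefully.

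Finally I would assemble the one-step bound $f(\bar{x}_{k+1}) - f(x) \le (1-\alpha_k)[f(\bar{x}_k)-f(x)] + \alpha_k\gamma_k[D(x_k,x) - D(x_{k+1},x)] + \frac{\alpha_k\|\delta_k\|_{\phi_k*}^2}{2\beta_k} + \alpha_k\langle\delta_k, x-x_k\rangle + \alpha_k R_k$, multiply through by $\lambda_k$, and use $\lambda_k(1-\alpha_k) = \lambda_{k-1}$ so the function-gap terms telescope. For the divergence terms I would perform an Abel (summation-by-parts) step: condition (\ref{eq:alpha_gamma_mono}) states that the coefficients $\lambda_k\alpha_k\gamma_k$ are nonincreasing, so all interior contributions and the final $-\lambda_K\alpha_K\gamma_K D(x_{K+1},x)$ term are nonpositive and can be dropped, leaving only $\alpha_0\gamma_0 D(x_0,x)$. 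Collecting the surviving terms yields (\ref{main-recur}).
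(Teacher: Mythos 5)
Your proposal is correct and follows essentially the same route as the paper's proof: $L$-smoothness combined with the identity $\bar{x}_{k+1}-\underline{x}_k=\alpha_k(x_{k+1}-x_k)$, the three-point lemma for the coupled prox step, convexity at $\underline{x}_k$ to produce the convex combination $(1-\alpha_k)f(\bar{x}_k)+\alpha_k f(x)$, condition (\ref{eq:alpha_gamma_bound}) to cancel the $L$-quadratic against $\gamma_k D(x_k,x_{k+1})$, Young's inequality pairing the noise cross term with $\beta_k D_{\phi_k}(x_k,x_{k+1})$, and $\lambda_k$-weighted telescoping with (\ref{eq:alpha_gamma_mono}). The only cosmetic difference is that you split $\langle\underline{G}_k,x_{k+1}-x_k\rangle$ through the comparator $x$ and recombine via $\alpha_k(x-x_k)=\alpha_k(x-\underline{x}_k)+(1-\alpha_k)(\bar{x}_k-\underline{x}_k)$, whereas the paper decomposes $\bar{x}_{k+1}-\underline{x}_k$ first; these are algebraically equivalent.
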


Let us make a few comments about Theorem \ref{thm:main-thm}.
First, the parameters $\{\gamma_k\}$ and $\{\alpha_k\}$ satisfy some recursion that are typical for Nesterov's accelerated method \cite{RN249}; their choices  are crucial to guarantee optimal convergence in convex smooth optimization. 
Second,  the dynamics of adaptive gradient exhibits a new pattern, crucially differing from many other adaptive methods such as \textsc{AdaGrad, Adam} and \textsc{Nadam}.  
Thanks to the momentum, the pattern renders a novel interplay of  proximal function $\phi_k$  with  variance of stochastic gradient: $\|\delta_k\|_{\phi_k^*}$, instead of with (sub)gradient norm $\|\underbar{G}_k\|_{\phi_k^*}$. 
Finally, we need to mention that the discussion so far is still conceptual, since the parameters have not been fixed yet. In particular, one important remaining issue is to select appropriate proximal function $\phi_k$. Hereafter, the paper will be devoted to deriving more specific
 choices of these parameters.

\section{Diagonal scaling and moving average schemes\label{sec:diag}}

There are many different ways to choose the underlying proximal function $\phi_k$. For practical use, we adopt diagonal scaling function :$\phi_k(x)=\frac{1}{2}\|x\|^2_{h_k}=\frac{1}{2} \langle x^T, \text{Diag}(h_k) x \rangle$, where $h_k\in\mathbb{R}^d$ and $h_{k,i}>0$, $1\le i \le d$. Consequently, the Bregman divergence has the following form:
$ D_{\phi_{k}}(x, y)=\frac{1}{2}\langle (x-y)^{T}\text{Diag}(h_{k})(x-y)\rangle $ and $\ D(x,y)=\frac{1}{2}\|x-y\|^2.$
\begin{algorithm}[h]
Compute $h_k$ and replace the update of $x_{k+1}$ in (\ref{eq:a2grad-2}) by
\begin{equation*}
x_{k+1} = \Pi_{\mathcal{X}} (x_k - \frac{1}{\gamma_k +  \beta_k  h_k} \underline{G}_{k}),
\end{equation*}
where $\Pi_{\mathcal{X}}$ denotes the projection operator over $\mathcal{X}$.
\caption{\textsc{A2Grad} with diagonal scaling \label{alg:a2grad_diag}}
\end{algorithm}

In addition, a crucial step to guarantee convergence is to ensure
that the sum $\sum_{k}\lambda_{k}\alpha_{k}R_{k}$ does not grow too
fast. One sufficient condition is to ensure that for any $k\ge0$:
\begin{equation}
\lambda_{k}\alpha_{k}\beta_{k}D_{\phi_{k}}(x_{k},x)\ge\lambda_{k-1}\alpha_{k-1}\beta_{k-1}D_{\phi_k}(x_{k-1},x). \label{monotone}
\end{equation}

We describe a specific variant of \textsc{A2Grad} using diagonal scaling function in Algorithm \ref{alg:a2grad_diag}, and provide a general scheme to choose its parameters. 
The convergence of Algorithm \ref{alg:a2grad_diag} is summarized
in the following theorem.

\begin{restatable}{thm_re}{diagcvg}
\label{thm:bound-diag} Let $x^*$ be an optimal solution. Also assume $ \|x_k-x^*\|_\infty^2 \le B$ for some $B >0$ (this assumption 
is satisfied, e.g., when $\mathcal{X}$ is compact). In Algorithm \ref{alg:a2grad_diag}, if $\alpha_{k}=\frac{2}{k+2}$,  $\gamma_{k}=\frac{2L}{k+1}$
and $\{\beta_k, h_k\} $ satisfy the monotone property $$\lambda_{k+1}\alpha_{k+1} \beta_{k+1} h_{k+1,i}\ge \lambda_k\alpha_k \beta_k h_{k,i},$$ 
$$\forall k=0,1,2,..., 1\le i \le d, $$ 
then we have 
\small
\begin{dmath*}
\mathbb{E}\left[f(\bar{x}_{K+1})-f(x^*)\right] 
 \le \frac{2L\|x^*-x_0\|^2}{(K+1)(K+2)}
 +\frac{2B\beta_{K}\mathbb{E}\left[\sum_{i=1}^{d}h_{K}^{(i)}\right]}{K+2}
 +\sum_{k=0}^{K}\mathbb{E}\left\lbrace\frac{(k+1)\|\delta_{k}\|_{h_{k}*}^{2}}{\beta_k(K+1)(K+2)}\right\rbrace.
\end{dmath*}
\end{restatable}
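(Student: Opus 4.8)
The plan is to obtain Theorem \ref{thm:bound-diag} as a direct specialization of the master recursion in Theorem \ref{thm:main-thm}, instantiated with the diagonal proximal function $\phi_k(x)=\frac{1}{2}\|x\|_{h_k}^2$ and the Euclidean choice $D(x,y)=\frac{1}{2}\|x-y\|^2$. First I would confirm that the closed-form step in Algorithm \ref{alg:a2grad_diag} is exactly the minimizer in (\ref{eq:a2grad-2}): zeroing the coordinate-wise gradient of $\langle\underline{G}_k,x\rangle+\frac{\gamma_k}{2}\|x-x_k\|^2+\frac{\beta_k}{2}\sum_i h_{k,i}(x_i-x_{k,i})^2$ gives $x_{k+1}=\Pi_{\mathcal{X}}\bigl(x_k-\underline{G}_k/(\gamma_k+\beta_k h_k)\bigr)$, so Theorem \ref{thm:main-thm} applies verbatim. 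I would then verify the stepsize hypotheses: with $\alpha_k=\frac{2}{k+2}$ one has $1-\alpha_k=\frac{k}{k+2}$, so the telescoping product yields $\lambda_k=\frac{(k+1)(k+2)}{2}$; then $L\alpha_k=\frac{2L}{k+2}\le\frac{2L}{k+1}=\gamma_k$ gives (\ref{eq:alpha_gamma_bound}), and the identity $\lambda_k\alpha_k\gamma_k\equiv 2L$ makes (\ref{eq:alpha_gamma_mono}) hold with equality. I would also record the bookkeeping identities $\lambda_k\alpha_k=k+1$ and $\lambda_K=\frac{(K+1)(K+2)}{2}$, which are what convert the abstract coefficients into the $(k+1)$ and $(K+1)(K+2)$ factors in the statement.

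The core step is to divide the conclusion of Theorem \ref{thm:main-thm} by $\lambda_K$, set $x=x^*$, and dispatch its three contributions. Since $\alpha_0=1$, the term $(1-\alpha_0)[f(\bar{x}_0)-f(x^*)]$ vanishes and $\alpha_0\gamma_0 D(x_0,x^*)/\lambda_K$ collapses to the leading term $\frac{2L\|x^*-x_0\|^2}{(K+1)(K+2)}$. For the stochastic contributions I would pass to expectations along the natural filtration generated by $\xi_0,\dots,\xi_{k-1}$: because $x_k$ and $x^*$ are measurable before $\xi_k$ is drawn and $\delta_k$ has zero conditional mean by unbiasedness of the stochastic gradient, the inner-product term $\mathbb{E}\langle\delta_k,x^*-x_k\rangle$ drops out, while the variance term becomes $\sum_k\frac{(k+1)\,\mathbb{E}\|\delta_k\|_{h_k*}^2}{\beta_k(K+1)(K+2)}$ after substituting $\lambda_k\alpha_k=k+1$ and $\|\delta_k\|_{\phi_k*}^2=\|\delta_k\|_{h_k*}^2=\sum_i\delta_{k,i}^2/h_{k,i}$.

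The main obstacle is the remaining term $\frac{1}{\lambda_K}\sum_{k=0}^K\lambda_k\alpha_k R_k$ with $R_k=\beta_k D_{\phi_k}(x_k,x^*)-\beta_k D_{\phi_k}(x_{k+1},x^*)$: because the scaling $h_k$ changes with $k$, this sum does \emph{not} telescope. My plan is to expand it coordinate-wise as $\frac{1}{2}\sum_i c_k^{(i)}(u_k^{(i)}-u_{k+1}^{(i)})$, where $c_k^{(i)}=\lambda_k\alpha_k\beta_k h_{k,i}$ and $u_k^{(i)}=(x_{k,i}-x_i^*)^2$, and apply summation by parts to rewrite the inner sum as $c_0^{(i)}u_0^{(i)}+\sum_{k\ge1}(c_k^{(i)}-c_{k-1}^{(i)})u_k^{(i)}-c_K^{(i)}u_{K+1}^{(i)}$. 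The monotonicity hypothesis $\lambda_{k+1}\alpha_{k+1}\beta_{k+1}h_{k+1,i}\ge\lambda_k\alpha_k\beta_k h_{k,i}$ makes each increment $c_k^{(i)}-c_{k-1}^{(i)}$ nonnegative, so I would drop the final negative term and bound every $u_k^{(i)}\le\|x_k-x^*\|_\infty^2\le B$; the nonnegative increments then telescope back to $B\,c_K^{(i)}$. Summing over $i$ and dividing by $\lambda_K$ produces a bound of order $\frac{B\beta_K\sum_i h_{K,i}}{K+2}$ via $\alpha_K=\frac{2}{K+2}$, which is precisely the middle term of the claim up to the stated constant.

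Combining the three pieces and taking the overall expectation then yields the asserted inequality. The structural point worth emphasizing is that the compactness bound $B$ and the monotonicity condition on $\lambda_k\alpha_k\beta_k h_{k,i}$ are exactly the two ingredients that keep this non-telescoping, data-dependent sum under control; without the monotone weights the summation-by-parts increments could change sign, and without the uniform bound $B$ they could not be summed. The acceleration parameters $\{\alpha_k,\gamma_k\}$, by contrast, contribute only through the clean identities $\lambda_k\alpha_k=k+1$ and $\lambda_k\alpha_k\gamma_k=2L$, which decouple the optimal $O(L/K^2)$ deterministic rate from the variance and diagonal-scaling terms.
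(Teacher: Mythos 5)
Your proposal is correct and follows essentially the same route as the paper: specialize Theorem \ref{thm:main-thm} with $\lambda_k=\frac{(k+1)(k+2)}{2}$, $\lambda_k\alpha_k=k+1$, $\lambda_k\alpha_k\gamma_k=2L$, kill the $\langle\delta_k,x^*-x_k\rangle$ term by conditioning, and control $\sum_k\lambda_k\alpha_k R_k$ by exactly the Abel-summation argument the appendix uses (nonnegative increments from the monotonicity hypothesis, each weighted by the uniform bound $B$, telescoping to $\lambda_K\alpha_K\beta_K\sum_i h_{K,i}B$). The only difference is bookkeeping of the factor $\tfrac12$ in $D_{\phi_k}$, which you flag and which the paper itself absorbs into the stated constant.
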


The main message of Theorem \ref{thm:bound-diag} is that the complexity rate of \textsc{A2Grad} can be viewed as the sum of deterministic dynamics and stochastic dynamics. 
The proximal function  $h_k$ exhibits an interplay with variance $\delta_k$ instead of  $\|\underbar{G}_k \|^2$, which is commonly seen in existing adaptive methods. 
Notably, such interplay between $h_k$ and $\delta_k$ is essential to make efficient use of momentum, because now momentum term can significantly accelerate convergence of deterministic dynamics while adaptive gradient will concentrate on the stochastic dynamics.
Moreover, thanks to the momentum, the bound on stochastic part is more complicated than that of adaptive gradient.  As we will see soon, it provides us a rich source of adaptive stepsizes.


\subsection*{Uniform vs nonuniform moving average}

 Our goal is  to design  specific adaptive stepsizes with guaranteed theoretical convergence.  
 For the moment let us assume that the full gradient $g_{t}$ is known.  
 The adaptive function $h_k$ will be chosen as a square root of the weighted sum of  sequence $\{\delta_k^2\}$.
Immediately we conclude the convergence rate in the following
corollary. 

\begin{restatable}{cor_re}{polyavg}
\label{cor:poly-avg}Under the assumption of Theorem \ref{thm:bound-diag}, let $\beta_{k}=\beta$ and $$h_{k}=\frac{1}{(k+1)^{q/2}}\sqrt{\sum_{\tau=0}^{k}(\tau+1)^q\delta_{\tau}^{2}}\ \text{ with } \ q\in[0, 2],$$
then we have
\begin{dmath*}
\mathbb{E}\left[f(\bar{x}_{K+1})-f(x^*)\right] 
 \le\frac{2L\|x-x_{0}\|^{2}}{(K+1)(K+2)} +\frac{2\mathbb{E}\left[\sum_{i=1}^{d}\|\delta_{0:K,i}\|\right]}{\beta(K+2)}
+\frac{2\beta B\ \mathbb{E}\left[\sum_{i=1}^{d}\|\delta_{0:K,i}\|\right]}{K+2}.
\end{dmath*}
\end{restatable}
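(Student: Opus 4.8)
The plan is to specialize Theorem~\ref{thm:bound-diag} to the stated choices and then control the two data-dependent terms on its right-hand side. First I would record the consequences of $\alpha_k=\frac{2}{k+2}$: since $1-\alpha_i=\frac{i}{i+2}$, the product $\prod_{i=1}^{k}(1-\alpha_i)=\frac{2}{(k+1)(k+2)}$, so $\lambda_k=\frac{(k+1)(k+2)}{2}$ and hence $\lambda_k\alpha_k=k+1$. Abbreviating $S_{k,i}:=\sum_{\tau=0}^{k}(\tau+1)^q\delta_{\tau,i}^2$ so that $h_{k,i}=(k+1)^{-q/2}\sqrt{S_{k,i}}$, the hypothesis $\lambda_{k+1}\alpha_{k+1}\beta_{k+1}h_{k+1,i}\ge\lambda_k\alpha_k\beta_k h_{k,i}$ reduces, after cancelling the constant $\beta_k=\beta$, to $(k+2)^{1-q/2}\sqrt{S_{k+1,i}}\ge(k+1)^{1-q/2}\sqrt{S_{k,i}}$. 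This holds pathwise for every $q\in[0,2]$, because $S_{k+1,i}\ge S_{k,i}$ and the exponent $1-q/2$ is nonnegative; this is one of the two places where $q\le 2$ is used, and it licenses the application of Theorem~\ref{thm:bound-diag}.

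Next I would dispose of the ``$h_K$'' term. The crude bound $S_{K,i}\le(K+1)^q\sum_{\tau=0}^{K}\delta_{\tau,i}^2=(K+1)^q\|\delta_{0:K,i}\|^2$ gives $h_{K,i}\le\|\delta_{0:K,i}\|$, so with $\beta_K=\beta$ the quantity $\frac{2B\beta_K\mathbb{E}[\sum_i h_{K,i}]}{K+2}$ is at most $\frac{2\beta B\,\mathbb{E}[\sum_i\|\delta_{0:K,i}\|]}{K+2}$, which is exactly the last term of the corollary.

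The main work is the stochastic sum $\sum_{k=0}^{K}\frac{(k+1)\,\mathbb{E}\|\delta_k\|_{h_k*}^2}{\beta(K+1)(K+2)}$. Because $\phi_k(x)=\tfrac12\|x\|_{h_k}^2$, the dual norm is $\|\delta_k\|_{h_k*}^2=\sum_i\delta_{k,i}^2/h_{k,i}$, and for each coordinate, with the convention $S_{-1,i}=0$,
\[
(k+1)\frac{\delta_{k,i}^2}{h_{k,i}}=(k+1)^{1-q/2}\,\frac{(k+1)^q\delta_{k,i}^2}{\sqrt{S_{k,i}}}=(k+1)^{1-q/2}\,\frac{S_{k,i}-S_{k-1,i}}{\sqrt{S_{k,i}}}.
\]
Bounding $(k+1)^{1-q/2}\le(K+1)^{1-q/2}$ (again using $q\le 2$) and summing over $k$ through the standard telescoping estimate $\frac{S_{k,i}-S_{k-1,i}}{\sqrt{S_{k,i}}}\le 2(\sqrt{S_{k,i}}-\sqrt{S_{k-1,i}})$ yields $\sum_{k=0}^{K}(k+1)\delta_{k,i}^2/h_{k,i}\le 2(K+1)^{1-q/2}\sqrt{S_{K,i}}\le 2(K+1)\|\delta_{0:K,i}\|$, where the last inequality reuses the crude bound on $S_{K,i}$. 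Summing over $i$, dividing by $\beta(K+1)(K+2)$, and taking expectations turns this into $\frac{2\,\mathbb{E}[\sum_i\|\delta_{0:K,i}\|]}{\beta(K+2)}$, the second term of the corollary; adding the unchanged deterministic term $\frac{2L\|x-x_0\|^2}{(K+1)(K+2)}$ completes the argument.

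I expect the telescoping estimate to be the crux, since it converts the per-iterate weighted variances into the full-horizon quantities $\|\delta_{0:K,i}\|$ and must be coordinated with the polynomial weight $(k+1)^{1-q/2}$ so that the residual power of $(K+1)$ cancels precisely against the $(K+1)^{q/2}$ that appears when bounding $\sqrt{S_{K,i}}$. The restriction $q\in[0,2]$ is the subtle but decisive ingredient, as it is needed both for the pathwise monotonicity and for the uniform bound on $(k+1)^{1-q/2}$; the only genuine difficulty is keeping the bookkeeping of these exponents correct.
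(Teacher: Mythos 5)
Your proposal is correct and follows essentially the same route as the paper: the paper also bounds $h_{K,i}\le\|\delta_{0:K,i}\|$ and controls the stochastic sum by rewriting it in terms of the weighted sequence $(\tau+1)^{q/2}\delta_\tau$, pulling out the factor $(K+1)^{1-q/2}$, and applying the bound $\sum_\tau a_\tau^2/\|a_{:\tau}\|\le 2\|a_{:t}\|$ (cited there as Lemma 10 of the AdaGrad paper, which you reprove inline via the telescoping estimate). Your explicit verification of the monotonicity hypothesis of Theorem \ref{thm:bound-diag} is a welcome addition that the paper leaves implicit.
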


In Corollary \ref{cor:poly-avg}, the stochastic part is related to the expected sum of $\|\delta_{0:K,i}\|$, for $i=1,2,...,d$. Under the mild assumption of bounded variance, namely, for $x\in\mathcal{X}$,  $\mathbb{E}_\xi\|
G(x,\xi)-g(x,\xi)\|^2\le \sigma^2$, where $\sigma>0$, we have 
$$\mathbb{E}[\|\delta_{0:K,i}\|]\le \sqrt{\sum_{k=0}^K\mathbb{E}[\delta_{k,i}^2]}\le \sqrt{K+1}\sigma 
$$ by Jensen's inequality.
Hence the stochastic part obtains the $\mathcal{O}(\frac{1}{\sqrt{K}})$ rate of convergence.  Moreover, the convergence rate also suggests that there exists an optimal $\beta$ to minimize the bound. However, such optimal value is rarely known and will be a hyper-parameter for tuning in the practice.

Furthermore, Corollary \ref{cor:poly-avg} allows us to choose a family of adaptive functions. Next we will consider two specific choices.
By setting $q=0$, we arrive at uniform moving average for forming the diagonal function: $h_{k}=\sqrt{\sum_{\tau=0}^{k}\delta_{\tau}^{2}}.$
We describe \textsc{A2Grad} with this choice in 
Algorithm \ref{alg:a2grad-uni}. 


\begin{algorithm}[h]
\KwIn{$v_{-1}=0$ and the rest of other parameters}
The $k$-th step of Algorithm \ref{alg:a2grad_diag}:
\begin{align*}
v_{k} & =v_{k-1}+\delta_{k}^{2}\\
h_{k} & =\sqrt{v_{k}}
\end{align*}

\caption{\textsc{A2Grad-uni}: Adaptive ASGD with uniform moving average\label{alg:a2grad-uni}}
\end{algorithm}

Setting $q=2$, we arrive at an incremental moving average with quadratic
weight: $h_k$ by $h_{k}=\frac{1}{(k+1)}\sqrt{\sum_{\tau=0}^{k}(\tau+1)^{2}\delta_{\tau}^{2}}. $ We describe \textsc{A2Grad} with such incremental scheme in Algorithm  \ref{alg:a2grad-inc}.
Compared with uniform averaging, the scaling vectors $\delta_k$ will first shrink by a factor $\frac{k^2}{(k+1)^2}$ before adding the new iterate. 
On the long run, the earlier scales will have a relative weights decaying at the rate of $\mathcal{O}(\frac {1}{k^2}$). To our best knowledge, this is the first study on using nonlinear polynomial weights in the work of adaptive gradient methods, and the first study of using nonlinear weights without introducing an additional $\mathcal{O}(\log{k})$ factor in the complexity bound. 


\begin{algorithm}[h]
\KwIn{$v_{-1}=0$ and the rest of other parameters}
The $k$-th step of Algorithm \ref{alg:a2grad_diag}:
\begin{align*}
v_{k} & = k^2/(k+1)^2 v_{k-1}+\delta_{k}^{2}\\
h_{k} & =\sqrt{v_{k}}
\end{align*}
\caption{\textsc{A2Grad-inc}: Adaptive ASGD with incremental moving average (quadratic weight) \label{alg:a2grad-inc}}
\end{algorithm}

\subsection*{Exponential moving average}
Besides the uniform and nonuniform schemes discussed above, 
it is important to know whether exponential moving average---a popular choice in deep learning area---can be combined with Nesterov's accelerated method with justified theoretical convergence.

A crucial difference between the family of adaptive gradients by exponential moving averages and the earlier \textsc{AdaGrad} in how the diagonal terms are accumulated. Typically, exponential average methods apply the update 
\begin{equation}
v_k=\theta v_{k-1} + (1-\theta) G_k^2,\quad h_k=\sqrt{v_k} \label{exp-ada}  
\end{equation}
with $\theta \in (0, 1)$ for deriving the diagonal term. Although exp-avg methods have received popularity in deep learning field, their theoretical understanding is limited and their convergence performance remains a question. In particular, recent \cite{RN308} provides a simple convex problem on which \textsc{Adam} fails to converge easily. 
The intuition of such phenomenon is as follows. Generally, to ensure algorithm convergence, SGD reduces the influence of variance of stochastic gradients by applying diminishing stepsize $\eta_k: \eta_k \rightarrow 0$  as $k\rightarrow 0$.  In \textsc{Adam} and many other exp-avg methods, the effective stepsize $\eta_k$ is controlled by the scaling term $h_k$ with $\eta_k\propto 1/h_k$. However, using exponential average (\ref{exp-ada}) we are unable to guarantee that stepsize will be diminishing stepsize.

To equip adaptive methods with exponential average, it is important to resolve the non-convergence issue observed in \textsc{Adam}. 
In view of Theorem \ref{thm:bound-diag}, it suffices to choose $h_k$ such that the monotone relation (\ref{monotone}) is satisfied.  
Towards this goal, we present in Algorithm \ref{alg:a2grad-exp}, a variant of \textsc{A2Grad} with exponential moving average. Overall, Algorithm  \ref{alg:a2grad-exp} is  different from the earlier algorithms due to an extra step.
 By introducing  auxiliary $\tilde{v}_k$, the scaling term $v_k$ is monotonically growing. This together with the explicit weight $\sqrt{k+1}$ in expressing $h_k$ guarantees the relation (\ref{monotone}).

\begin{algorithm}[h]
\KwIn{$\tilde{v}_{-1}=0$ and the rest of other parameters}
The $k$-th step of Algorithm \ref{alg:a2grad_diag}:
\begin{align*}
\tilde{v}_{k} &=\begin{cases}
\delta_{k}^{2} & \text{if\ }k=0\\
\rho\tilde{v}_{k-1}+(1-\rho)\delta_{k}^{2} & \text{otherwise}
\end{cases}\\
v_{k} &=\max\left\{ \tilde{v}_{k},v_{k-1}\right\} \\
h_{k} &=\sqrt{(k+1)v_{k}}
\end{align*}
\caption{\textsc{A2Grad-exp}: Adaptive ASGD with exponential moving average\label{alg:a2grad-exp}}
\end{algorithm}

We describe the convergence rate of Algorithm \ref{alg:a2grad-exp} in the following corollary, and leave the proof details in Appendix.
\begin{restatable}{cor_re}{expavg}
\label{cor:exp-avg}Under the assumption of Theorem \ref{thm:bound-diag}, if $\beta_{k}=\beta$, $\rho\in(0,1)$ and $h_{k}$ in Algorithm \ref{alg:a2grad-exp},
then we have
\begin{dmath}
\mathbb{E}\left[f(\bar{x}_{K+1})-f(x^*)\right]  
\le
\frac{2L\|x^*-x_{0}\|^{2}}{(K+1)(K+2)}
+\frac{2\beta B\sum_{i=1}^{d}\mathbb{E}\left[\max_{0\le k\le K}\left|\delta_{k,i}\right|\right]}{\sqrt{K+2}}
+\frac{\sum_{i=1}^{d}\mathbb{E}\left[\|\delta_{0:K,i}\|_1\right]}{2\beta(1-\rho)\sqrt{K+1}(K+2)}. \label{eq:converge-exp-avg}
\end{dmath}
Moreover, if we assume that the distribution of error $\delta_i$ is sub-Gaussian with variance factor $\bar{\sigma}$, namely, the moment generating function  satisfies:
$ \mathbb{E}e^{t \delta_i}\le e^{t^2 \bar{\sigma}^2/2} , \forall t>0$, then we have
\begin{dmath}
\mathbb{E}\left[f(\bar{x}_{K+1})-f(x^*)\right] 
\le \frac{2L\|x^*-x_0\|^2}{(K+1)(K+2)}
+ 2\beta B\frac{\sqrt{2\log(2(K+1))}\bar{\sigma}}{\sqrt{K+2}}
+\frac{\sqrt{2\pi}d \bar{\sigma}}{2\beta(1-\rho)\sqrt{K+2}}.
\end{dmath}
If we assume bounded gradient, namely, $\exists C>0$, s.t. $\|G(x, \xi)\|_\infty \le C$, then 
 \begin{dmath}
\mathbb{E}\left[f(\bar{x}_{K+1})-f(x^*)\right] 
\le \frac{2L\|x^*-x_0\|^2}{(K+1)(K+2)}
+ \frac{4\beta BCd}{\sqrt{K+2}}
+\frac{Cd}{\beta(1-\rho)\sqrt{K+2}}.
\end{dmath}
\end{restatable}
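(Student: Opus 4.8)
The plan is to obtain all three displays as corollaries of Theorem~\ref{thm:bound-diag}, so the first task is to verify that the choices $\beta_k=\beta$ and $h_{k,i}=\sqrt{(k+1)v_{k,i}}$ of Algorithm~\ref{alg:a2grad-exp} satisfy its hypotheses. For $\alpha_k=\frac{2}{k+2}$ one computes $\prod_{i=1}^k(1-\alpha_i)=\frac{2}{(k+1)(k+2)}$, hence $\lambda_k=\frac{(k+1)(k+2)}{2}$ and the convenient identity $\lambda_k\alpha_k=k+1$. The monotone requirement $\lambda_{k+1}\alpha_{k+1}\beta h_{k+1,i}\ge\lambda_k\alpha_k\beta h_{k,i}$ then reduces to $(k+2)^{3/2}\sqrt{v_{k+1,i}}\ge(k+1)^{3/2}\sqrt{v_{k,i}}$, which holds because the max-step $v_k=\max\{\tilde v_k,v_{k-1}\}$ makes $v_{k,i}$ nondecreasing in $k$. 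With the hypotheses in force I would substitute $h_{k,i}=\sqrt{(k+1)v_{k,i}}$ into the three terms of Theorem~\ref{thm:bound-diag}, leaving the first term untouched and rewriting the dual-norm term as $\|\delta_k\|_{h_k*}^2=\sum_i\delta_{k,i}^2/\sqrt{(k+1)v_{k,i}}$.

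Everything else hinges on two elementary estimates for $v_{k,i}$. For the middle (deterministic-scale) term I would use the upper bound $v_{K,i}\le\max_{0\le k\le K}\delta_{k,i}^2$: each $\tilde v_{k,i}$ is a convex combination of $\delta_{0,i}^2,\dots,\delta_{k,i}^2$ with weights summing to one, and the running maximum preserves this bound, so $h_{K,i}\le\sqrt{K+1}\,\max_k|\delta_{k,i}|$; combined with $\frac{\sqrt{K+1}}{K+2}\le\frac{1}{\sqrt{K+2}}$ this produces the $\max$-term of \eqref{eq:converge-exp-avg}. For the last (stochastic) term I would use the lower bound $v_{k,i}\ge(1-\rho)\delta_{k,i}^2$, immediate from the recursion $\tilde v_{k,i}=\rho\tilde v_{k-1,i}+(1-\rho)\delta_{k,i}^2$ together with $v_{k,i}\ge\tilde v_{k,i}$. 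This turns $\frac{(k+1)\delta_{k,i}^2}{h_{k,i}}=\frac{\sqrt{k+1}\,\delta_{k,i}^2}{\sqrt{v_{k,i}}}$ into a constant multiple of $\sqrt{k+1}\,|\delta_{k,i}|\le\sqrt{K+1}\,|\delta_{k,i}|$, so that summation over $k$ yields $\sqrt{K+1}\,\|\delta_{0:K,i}\|_1$ and, after dividing by the $(K+1)(K+2)$ prefactor, the claimed $\frac{1}{\sqrt{K+1}(K+2)}$ scaling; extracting the precise constant $\frac{1}{2(1-\rho)}$ is the one delicate piece of bookkeeping in this step.

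The two specializations are then purely probabilistic. Under the sub-Gaussian assumption I would bound the middle term by the maximal inequality $\mathbb{E}[\max_{0\le k\le K}|\delta_{k,i}|]\le\bar\sigma\sqrt{2\log(2(K+1))}$, applying the standard estimate (which needs only marginal sub-Gaussianity) to the $2(K+1)$ one-sided variables $\pm\delta_{k,i}$, and I would bound each summand of $\mathbb{E}\|\delta_{0:K,i}\|_1$ by $\mathbb{E}|\delta_{k,i}|\le\sqrt{2\pi}\,\bar\sigma$, obtained by integrating the tail bound $\mathbb{P}(|\delta_{k,i}|>t)\le2e^{-t^2/(2\bar\sigma^2)}$; summing the latter gives $(K+1)\sqrt{2\pi}\,\bar\sigma$ and hence the stated rate. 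Under the bounded-gradient assumption $\|G(x,\xi)\|_\infty\le C$ I would simply note that $|\delta_{k,i}|\le2C$, since both $\underline G_{k,i}$ and the corresponding true-gradient component (an expectation of bounded stochastic gradients) are at most $C$ in magnitude, and substitute this into both the $\max$-term and the $\ell_1$-term.

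I expect the main obstacle to be the tension created by the max-step: it is exactly what guarantees the monotonicity~\eqref{monotone} demanded by Theorem~\ref{thm:bound-diag}, yet one must simultaneously retain the lower bound $v_{k,i}\ge(1-\rho)\delta_{k,i}^2$ so that the stochastic term stays small, and check that the explicit $\sqrt{k+1}$ weight built into $h_k$ cancels correctly against the $(k+1)/[(K+1)(K+2)]$ prefactor, leaving the optimal $1/\sqrt{K}$ decay rather than a spurious $\log K$ factor.
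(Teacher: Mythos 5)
Your proposal is correct and follows essentially the same route as the paper: the upper bound $h_{K,i}\le\sqrt{K+1}\max_{k}|\delta_{k,i}|$ for the middle term, the lower bound on $v_{k,i}$ from the exponential recursion for the stochastic term, the standard MGF/Jensen maximal inequality plus tail integration for the sub-Gaussian case, and $|\delta_{k,i}|\le 2C$ for the bounded-gradient case. Your explicit verification of the monotonicity condition and your slightly sharper lower bound $h_{k,i}\ge\sqrt{(1-\rho)(k+1)}\,|\delta_{k,i}|$ (versus the paper's $(1-\rho)\sqrt{k+1}\,|\delta_{k,i}|$) only strengthen the argument, so the stated constants follow a fortiori.
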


\paragraph{Remark} The above Corollary shows that our algorithm attains an $O(\sqrt{\log(K)/K})$ rate of complexity with a relatively weak sub-Gaussian assumption. 
Alternatively, many adaptive gradient methods adopts the bounded gradient norm assumption. For example, \cite{RN308} has shown the convergence  of \textsc{AMSGrad} using regret analysis.  
Their result can be translated into an $O(\sqrt{\log(K)/K})$ rate using online to batch conversion. In comparison, the last part of Corollary \ref{cor:exp-avg} proves  a much better $O(1/\sqrt{K})$ rate of complexity under the same assumption. 
To our best knowledge, this seems to be the first adaptive gradient with exponential moving average that obtains $O(1/\sqrt{K})$ rate.


\section{Experiments\label{sec:Experiments}}
Our goal of this section is  to demonstrate the advantage of our proposed algorithms when compared with existing adaptive gradient methods.
 Specifically, we use logistic regression and neural network to demonstrate the performance of our proposed algorithms for convex and nonconvex problems respectively. 
 We will investigate the performance of three variants of our algorithms, namely, \textsc{A2Grad-uni}, \textsc{A2Grad-inc} and \textsc{A2Grad-exp}
For comparison, we choose  \textsc{Adam} and \textsc{AMSGrad} which employ exponential moving average. We skip \textsc{AdaGrad} since it always gives inferior performance.
\paragraph{Practical implementation.} Following the same manner in deep learning platforms (like PyTorch), namely, the parameters taking gradient are the same to evaluate objectives, we investigate performance of $\underline{x}_{k}$. Observing that
 \begin{eqnarray*}
\underline{x}_{k+1} &=&(1-\alpha_{k+1})\bar{x}_{k+1}+\alpha_{k+1}x_{k+1} \\
  &=&(1-\alpha_{k+1})\left[\underline{x}_{k}+\alpha_{k}(x_{k+1}-x_{k})\right]+\alpha_{k+1}x_{k+1}\\
  &=&(1-\alpha_{k+1})\underline{x}_{k}+\alpha_{k+1}x_{k+1}\\
  & &-(1-\alpha_{k+1})\frac{\alpha_{k}}{\gamma_{k}+\beta_{k}h_{k}}\underline{G}_{k}  
\end{eqnarray*}
 Hence we can reduce to two variables by eliminating $\bar{x}_{k+1}$.
Let us denote $y_k=\underline{x}_k$ and $G_k=\underline{G}_k$ for simplicity, we present the rewritten implementation in Algorithm \ref{alg:a2grad-imp}.

\begin{algorithm}[h]
\KwIn{the rest of other parameters}
The $k$-th step of Algorithm \ref{alg:a2grad_diag}:
\begin{align*}
x_{k+1} &=x_k -\frac{1}{\gamma_k +  \beta_k  h_k}G_{k}\\
y_{k+1} &=(1-\alpha_{k+1})y_k + \alpha_{k+1}x_{k+1}-\frac{(1-\alpha_{k+1})\alpha_{k}}{\gamma_{k}+\beta_{k}h_{k}}G_{k}
\end{align*}
\caption{ Adaptive ASGD rewritten \label{alg:a2grad-imp}}
\end{algorithm}

 \paragraph{Parameter estimation.} In order to run \textsc{A2Grad},  two groups of parameters are left for estimation.
\begin{itemize}
  \item Lipschitz constant $L$. This part controls the convergence of deterministic part, since $\gamma_k$ is related to $L$, it also explicitly affects each feature element. Lipschitz constant is in general unknown, but can be estimated in different ways.  On nonconvex model, we choose several initial values empirically and perform grid search to find the best one.
  \item $\beta$ and $h_k$. They effectively impose a decaying stepsize for each feature element. $\beta$ can be chosen by grid search. $h_k$ depends on the empirical variance, a good heuristic solution is to apply moving average of the past stochastic gradient. Let $\tilde{g_k}=\frac{1}{k+1}\sum_{\tau=0}^k\underline{G}_\tau$ and let $\delta_k=\underline{G}_k-\tilde{g}_k$.
\end{itemize}

\paragraph{Logistic regression on \texttt{MNIST}}
In the first experiment, we investigate algorithm performance for the multi-class logistic regression problem. The examined \texttt{MNIST} dataset consists of 
784-d image vectors that are grouped into 10 classes. The size of mini-batch is 128, as suggested in the literature. 
\paragraph{Two-layer neural network on \texttt{MNIST}}
In this experiment, we construct a neural network with two fully-connected (FC) linear layers with 1000 hidden units, which are connected by ReLU activation. We test the model on MNIST dataset with batch size of 128 and and use cross-entropy loss function.
\paragraph{Deep neural networks on \texttt{CIFAR10}}
The goal of this experiment is to classify images into one of the 10 categories. 
 \texttt{CIFAR10} dataset consists of 60000 $32\times 32$ images and uses 10000 of them for testing, and our testing architectures include \textsc{Cifarnet} and \textsc{Vgg16}.  
 \textsc{Cifarnet}  involves 2 convolutional (conv.) layers and FC layers. 
 Conv. layers. are followed by layer response normalization and max pooling layer, and  a dropout layer with keep probability $0.5$ is applied between the FC layers. 
 \textsc{Vgg} (\cite{simonyan2015very}) are deep convolutional neural networks with increasing depth but small kernel-size ($3\times3 $) filters. 
 In particular, we adopt the architecture of \textsc{Vgg16} which consists of 13 conv. layers and 3 FC layers. 
 
 The hyper-parameters are chosen by grid search. For both \textsc{Adam} and \textsc{AMSGrad}, we set $\beta_1=0.9$ and choose $\beta_2$ from grid $\{0.99, 0.999\}$. 
 Learning rate is chosen from the grid $\{0.0001, 0.001, 0.01, 0.1\}$ and momentum parameter is from the grid $\{0.5, 0.9\}$. For our methods, we pick $L$ from the grid $\{0.1, 1, 10\}$ and $\beta_0$ from the grid $\{10, 50, 100, 1000\}$.
We plot the experimental results on the average of 5 runs in Figure \ref{fig:experiment}. On the test of deep neural nets  on \texttt{CIFAR10}, we uses fixed stepsize as  suggested by the recent  paper~\cite{RN308}.
In all the experiments, we find that \textsc{A2Grad} has competitive performance compared with state of the art adaptive gradients \textsc{Adam} and \textsc{AMSGrad}, and often achieve better optimization performance. 
The advantage in optimization performance is also translated into the advantage in generalization performance.

\begin{figure}

\begin{subfigure}{0.33\textwidth}
\includegraphics[width=0.9\linewidth, height=4cm]{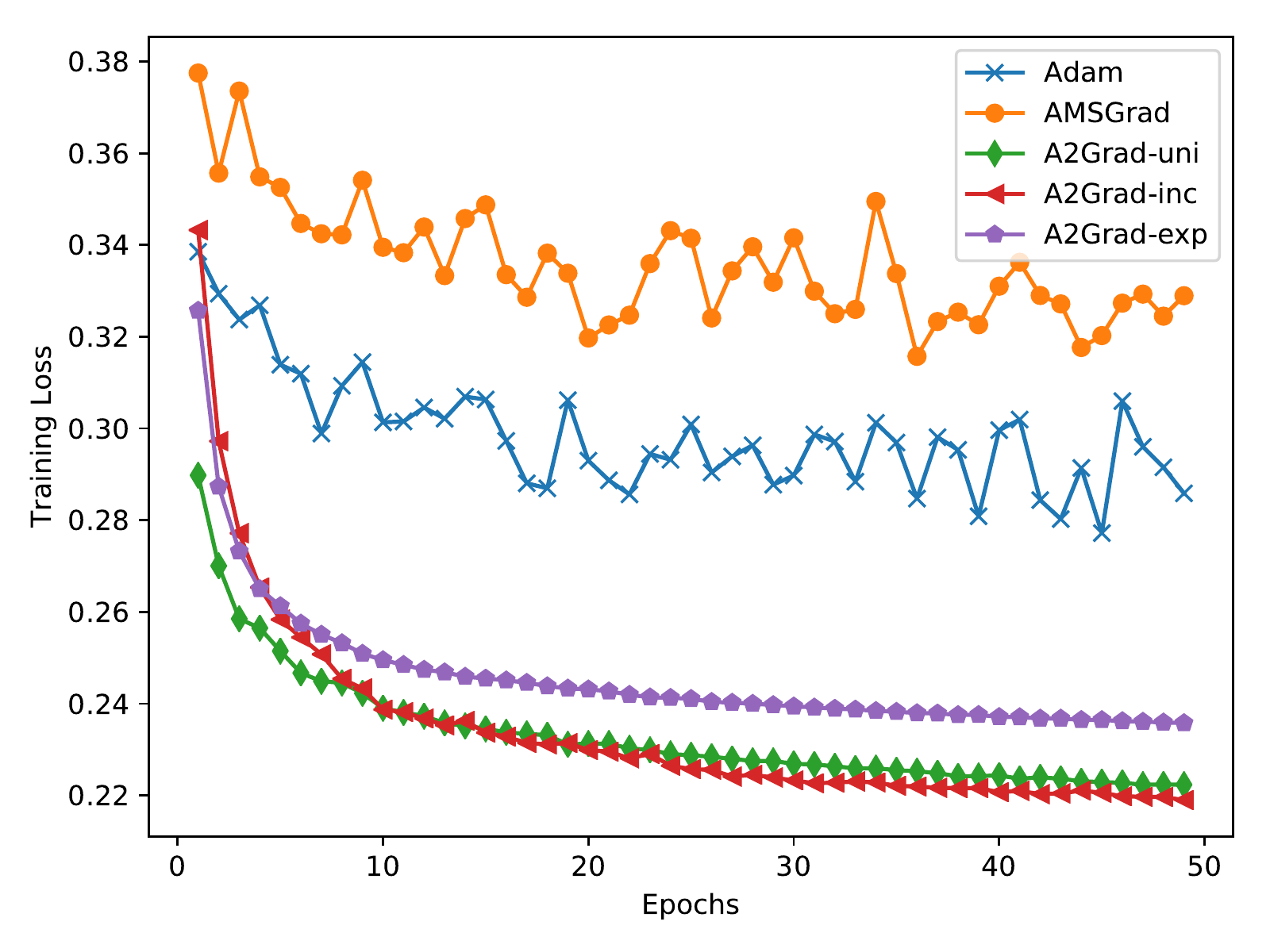}
\caption{}
\label{fig:logreg training loss}
\end{subfigure}
\begin{subfigure}{0.33\textwidth}
\includegraphics[width=0.9\linewidth, height=4cm]{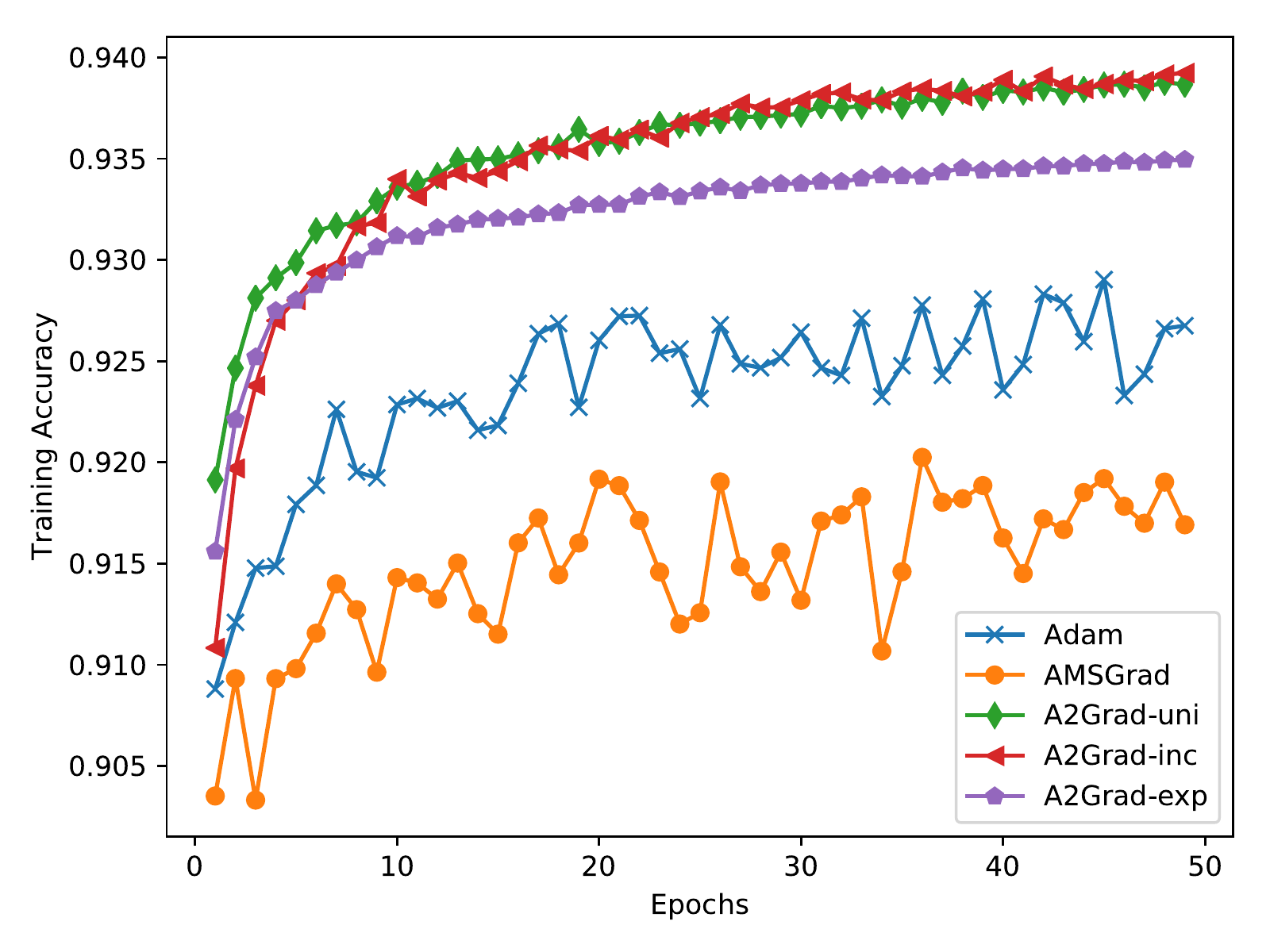}
\caption{}
\label{fig:logreg training acc}
\end{subfigure}
\begin{subfigure}{0.33\textwidth}
\includegraphics[width=0.9\linewidth, height=4cm]{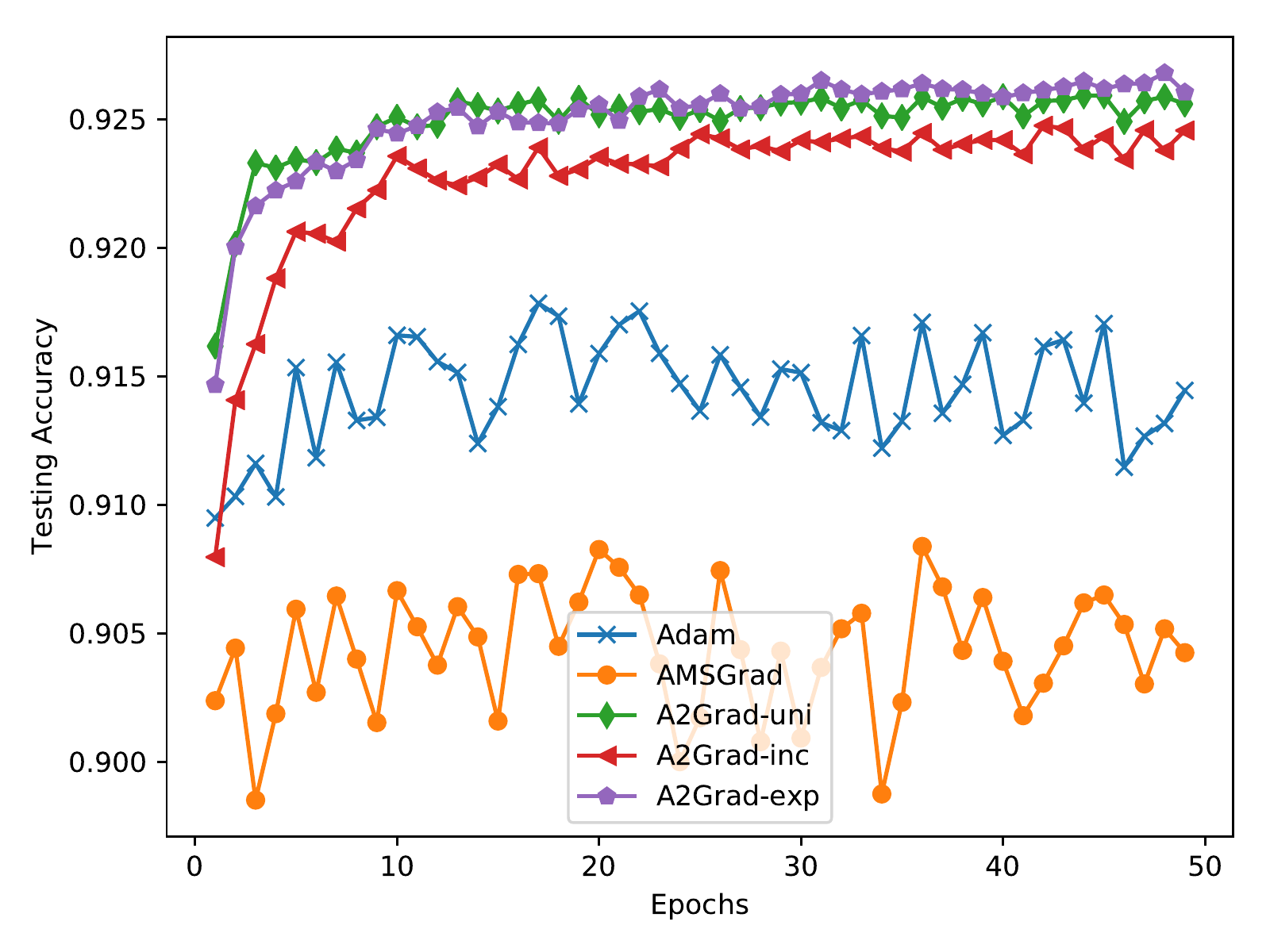}
\caption{}
\label{fig:logreg test acc}
\end{subfigure}

\begin{subfigure}{0.33\textwidth}
\includegraphics[width=0.9\linewidth, height=4cm]{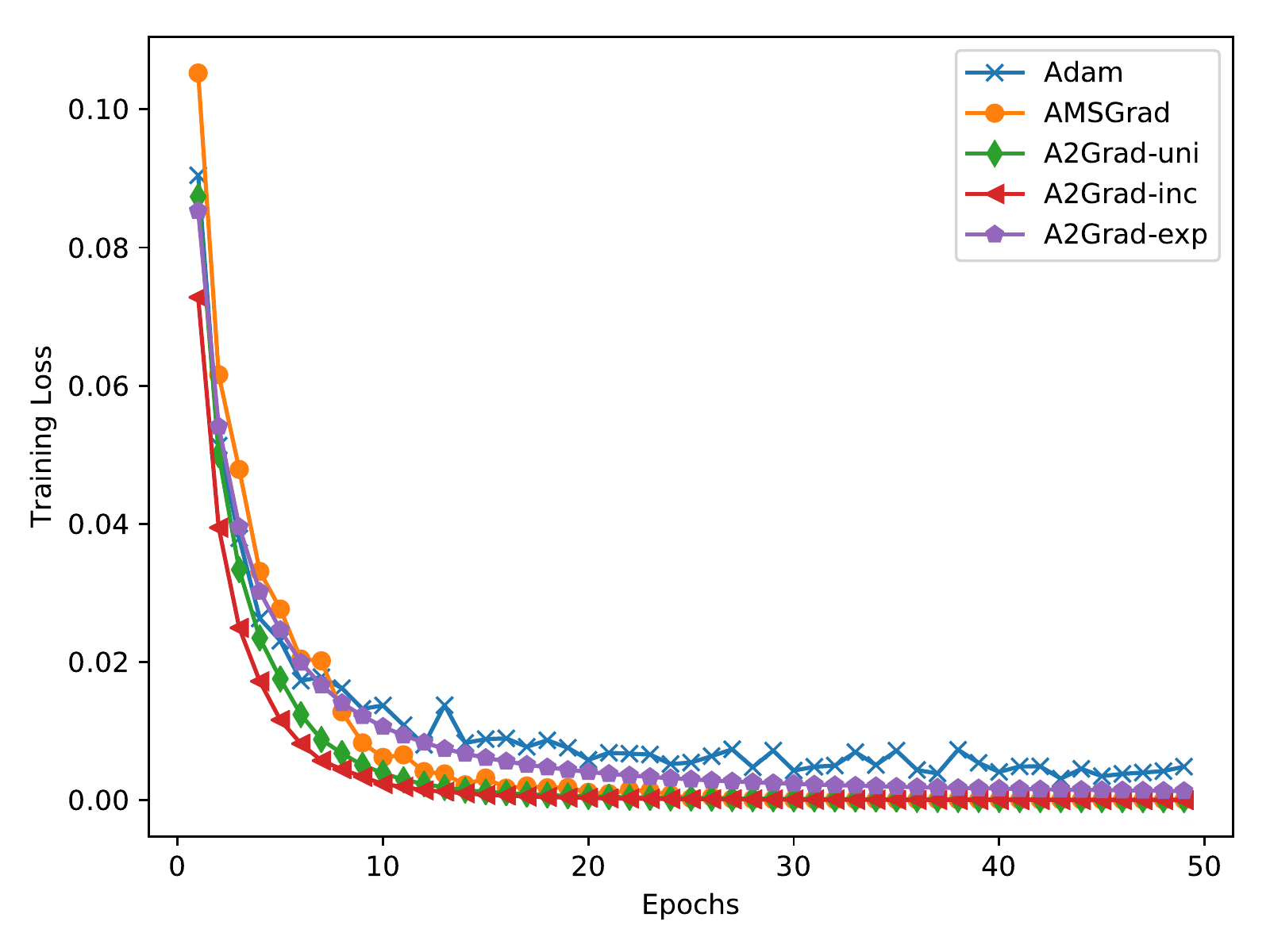}
\caption{}
\label{fig:nn training loss}
\end{subfigure}
\begin{subfigure}{0.33\textwidth}
\includegraphics[width=0.9\linewidth, height=4cm]{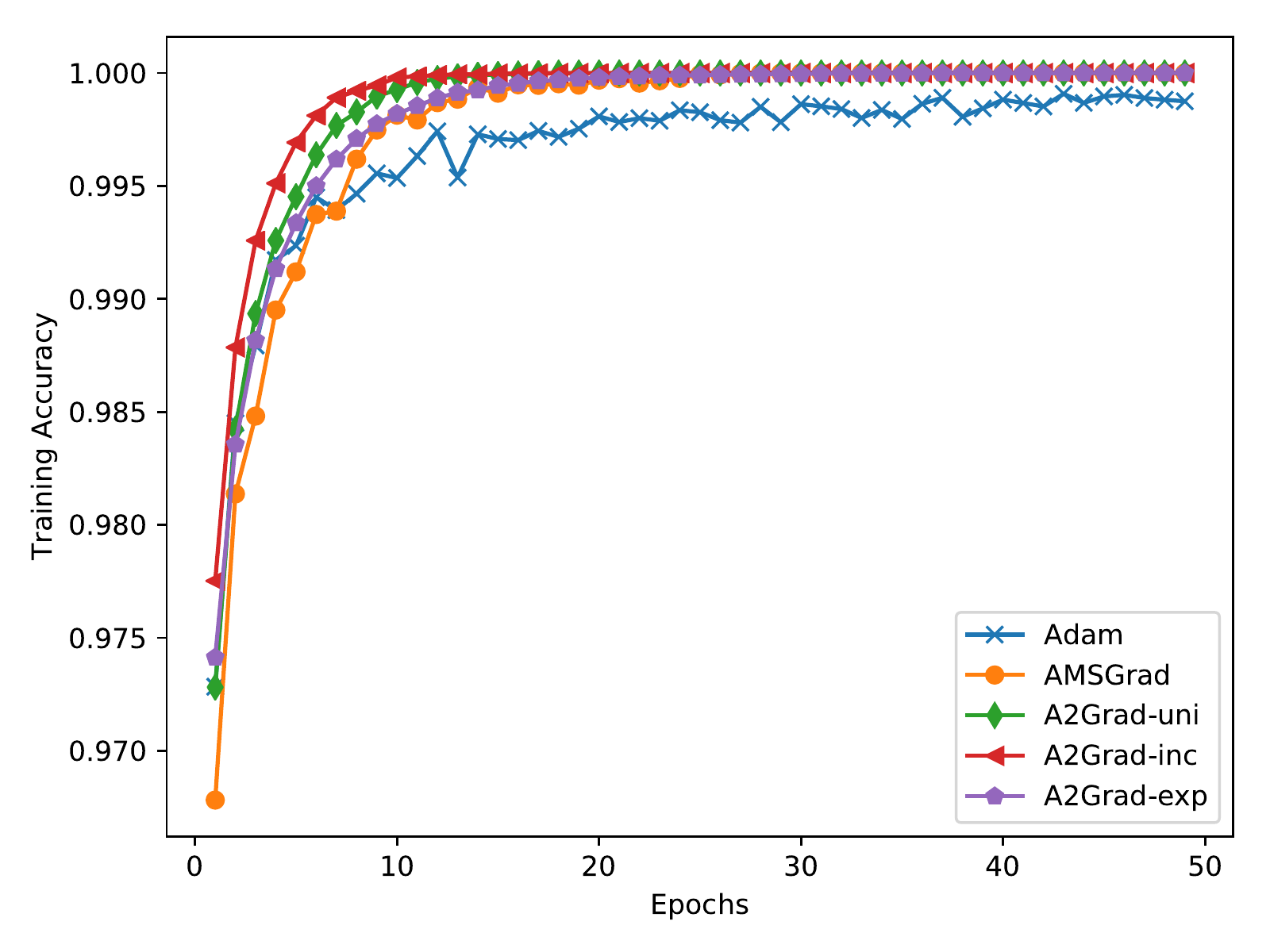}
\caption{}
\label{fig:nn training acc}
\end{subfigure}
\begin{subfigure}{0.33\textwidth}
\includegraphics[width=0.9\linewidth, height=4cm]{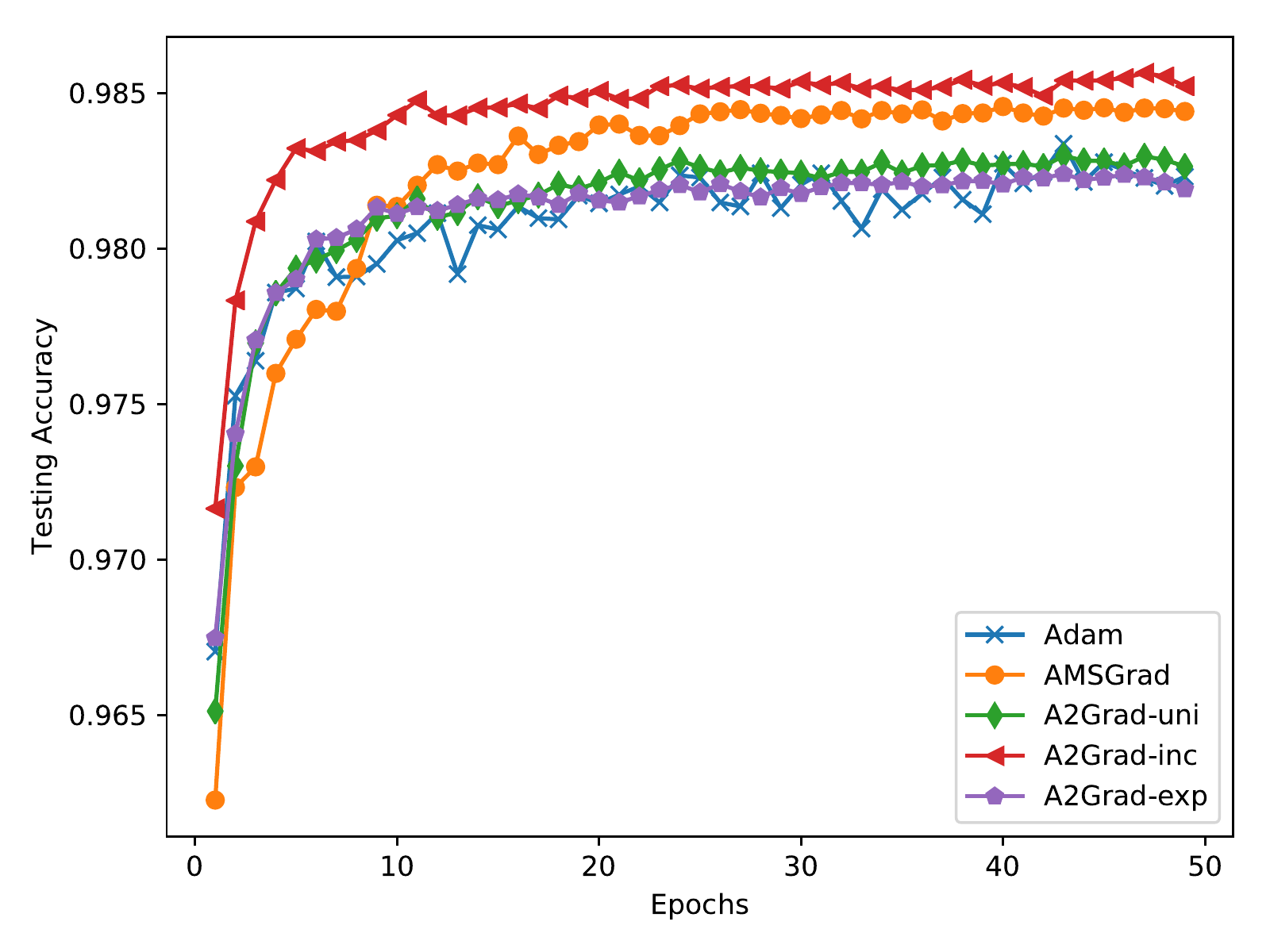}
\caption{}
\label{fig:nn test acc}
\end{subfigure}

\begin{subfigure}{0.33\textwidth}
\includegraphics[width=0.9\linewidth, height=4cm]{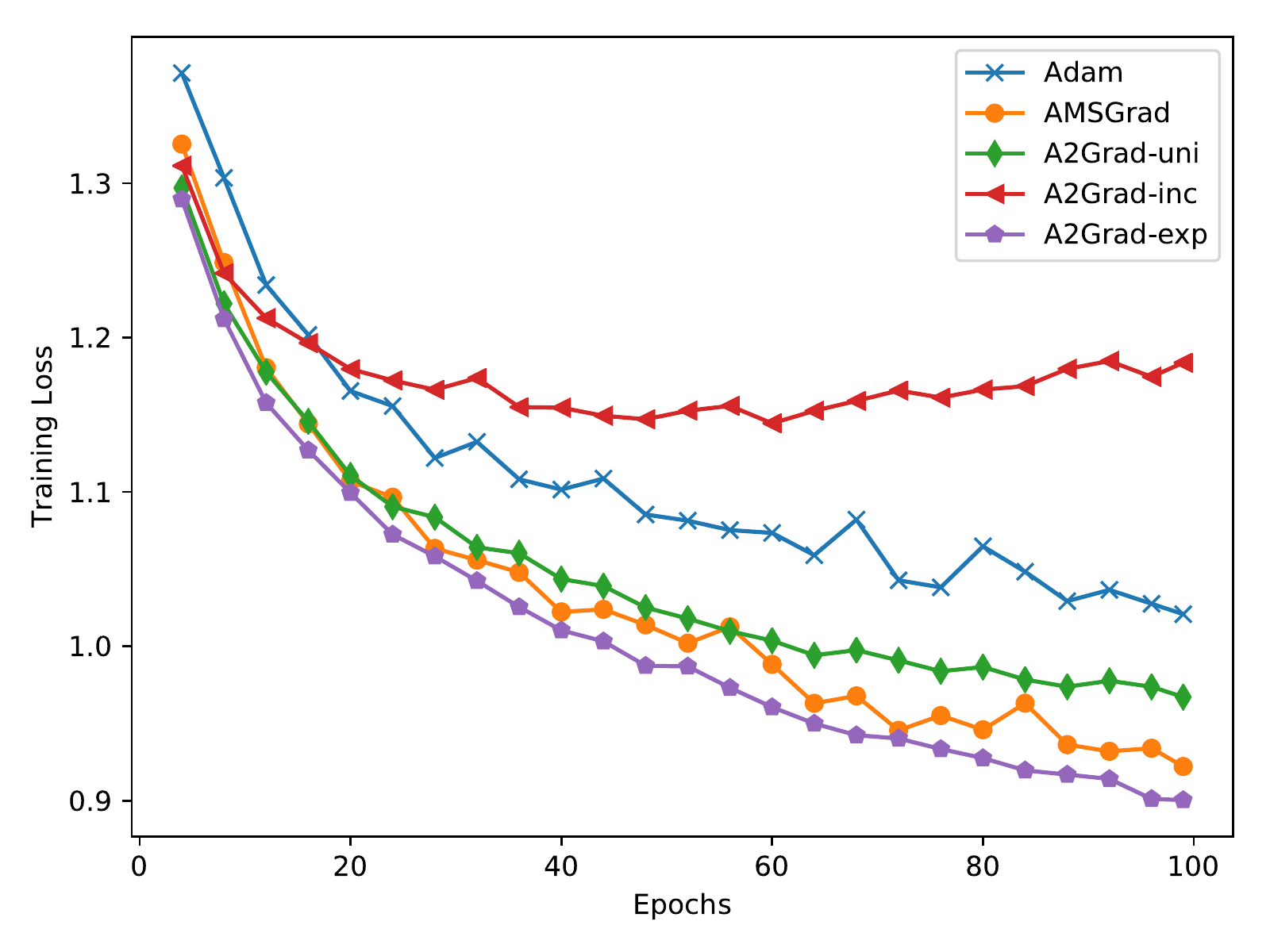}
\caption{}
\label{fig:cifarnet training loss}
\end{subfigure}
\begin{subfigure}{0.33\textwidth}
\includegraphics[width=0.9\linewidth, height=4cm]{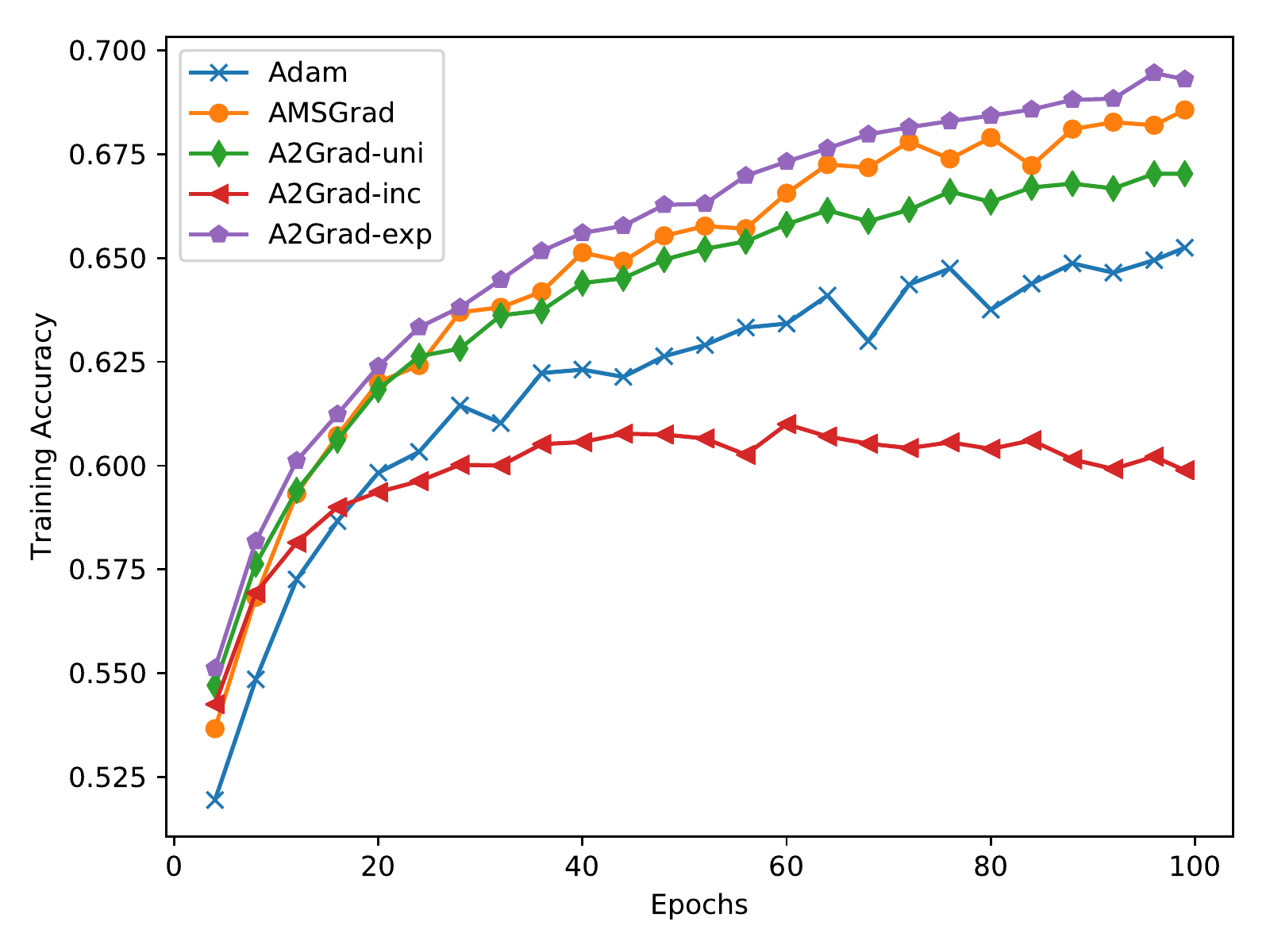}
\caption{}
\label{fig:cifarnet training acc}
\end{subfigure}
\begin{subfigure}{0.33\textwidth}
\includegraphics[width=0.9\linewidth, height=4cm]{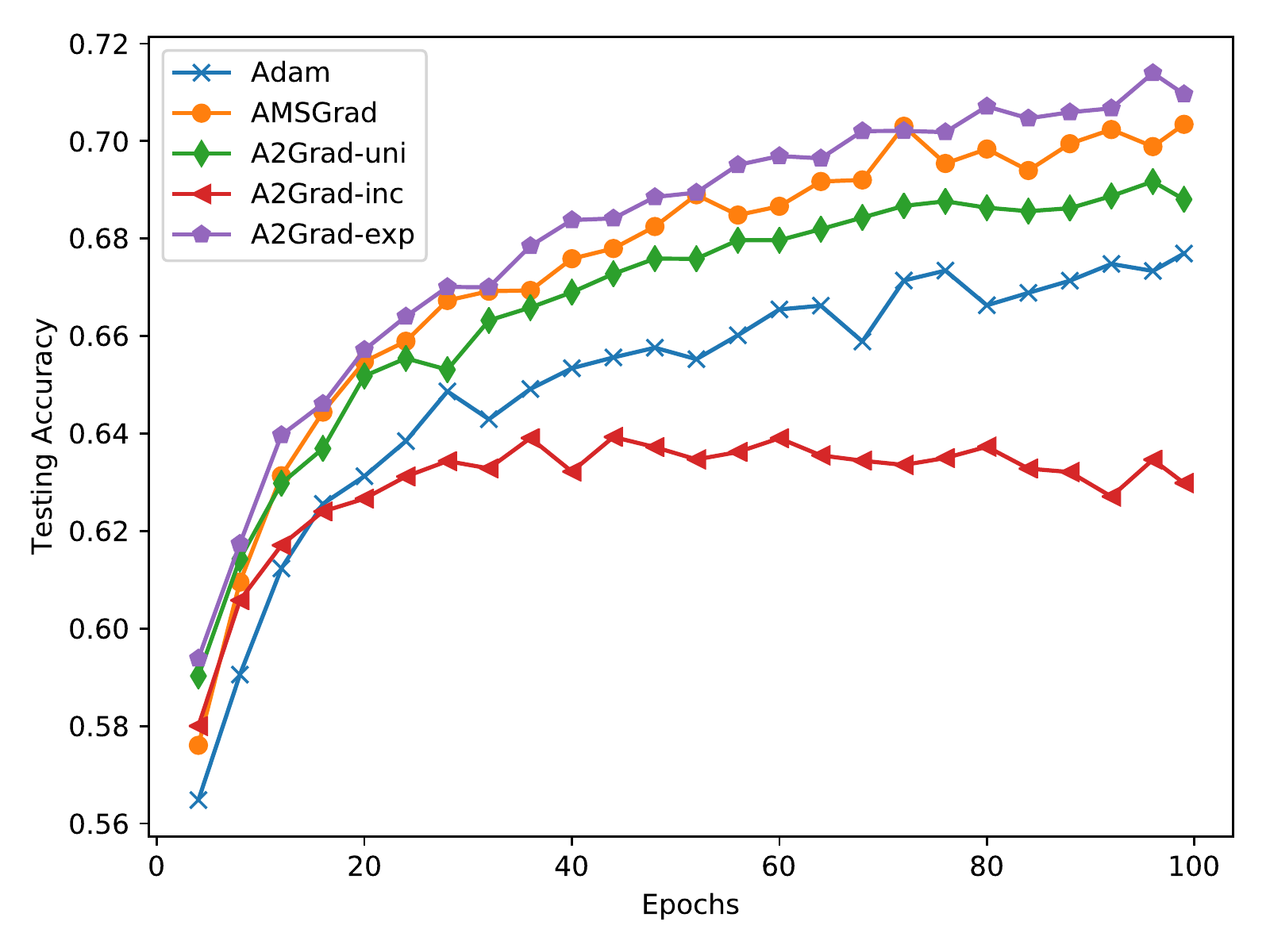}
\caption{}
\label{fig:cifarnet test acc}
\end{subfigure}

\begin{subfigure}{0.33\textwidth}
\includegraphics[width=0.9\linewidth, height=4cm]{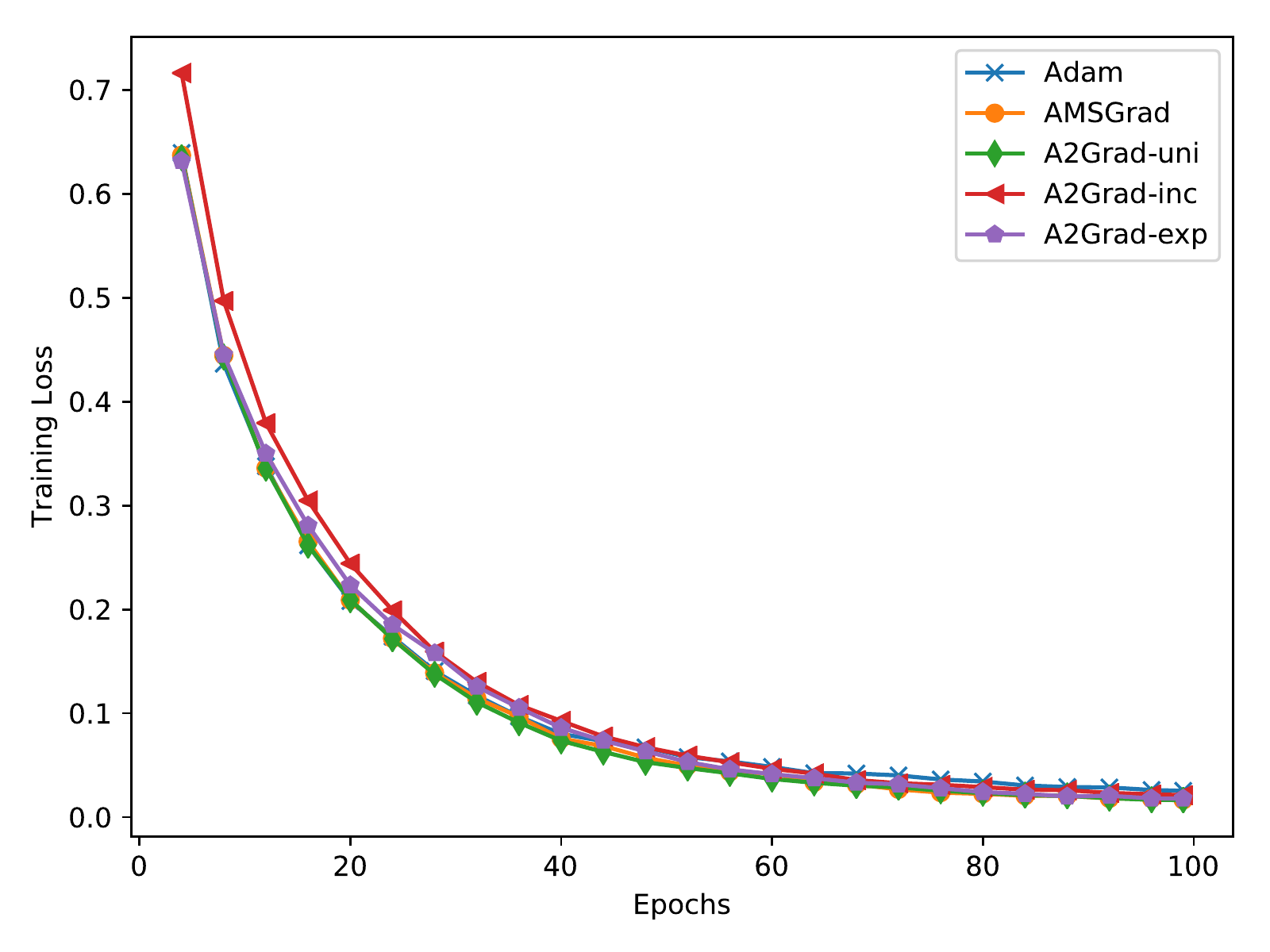}
\caption{}
\label{fig:vgg training loss}
\end{subfigure}
\begin{subfigure}{0.33\textwidth}
\includegraphics[width=0.9\linewidth, height=4cm]{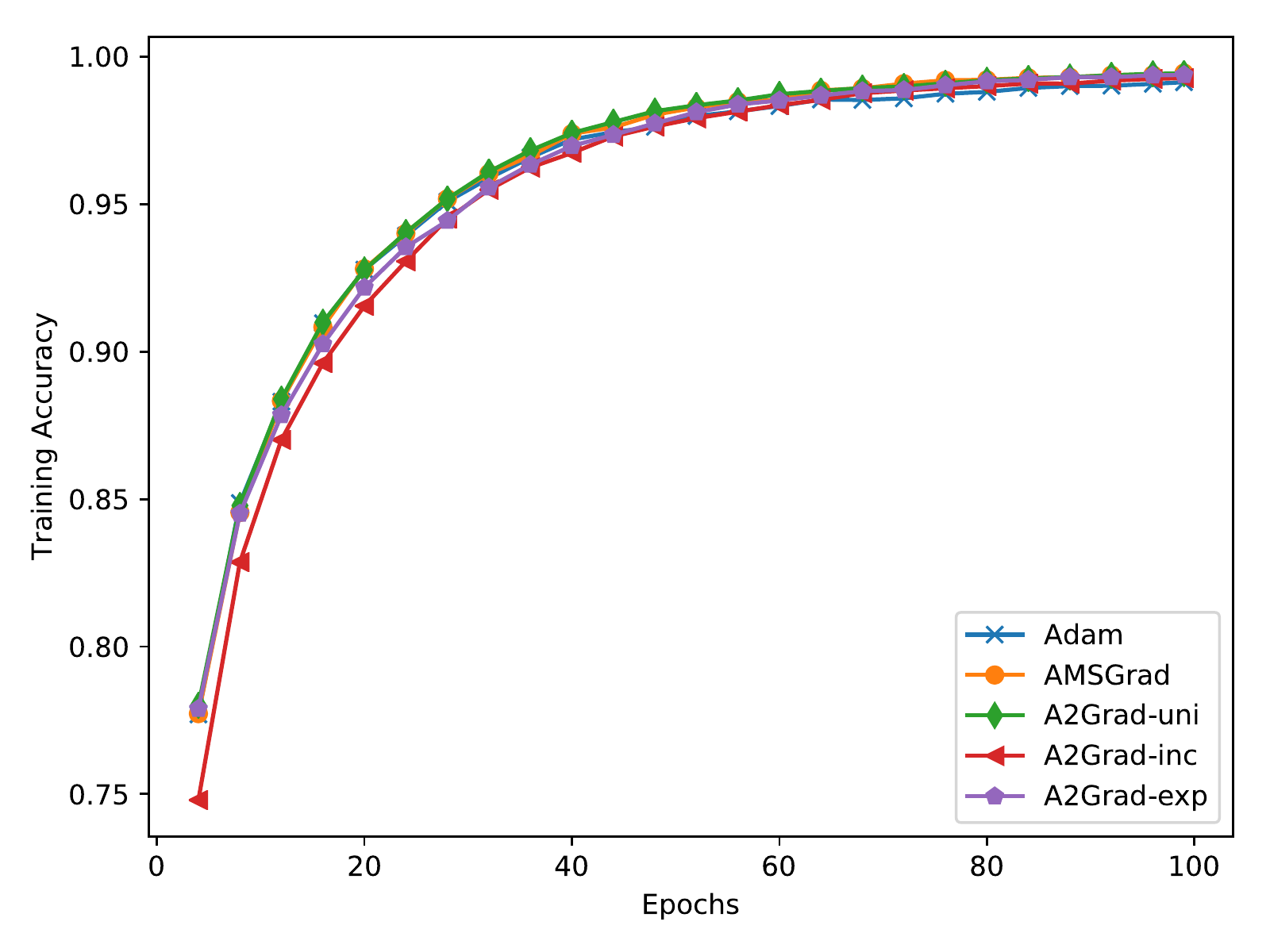}
\caption{}
\label{fig:vgg training acc}
\end{subfigure}
\begin{subfigure}{0.33\textwidth}
\includegraphics[width=0.9\linewidth, height=4cm]{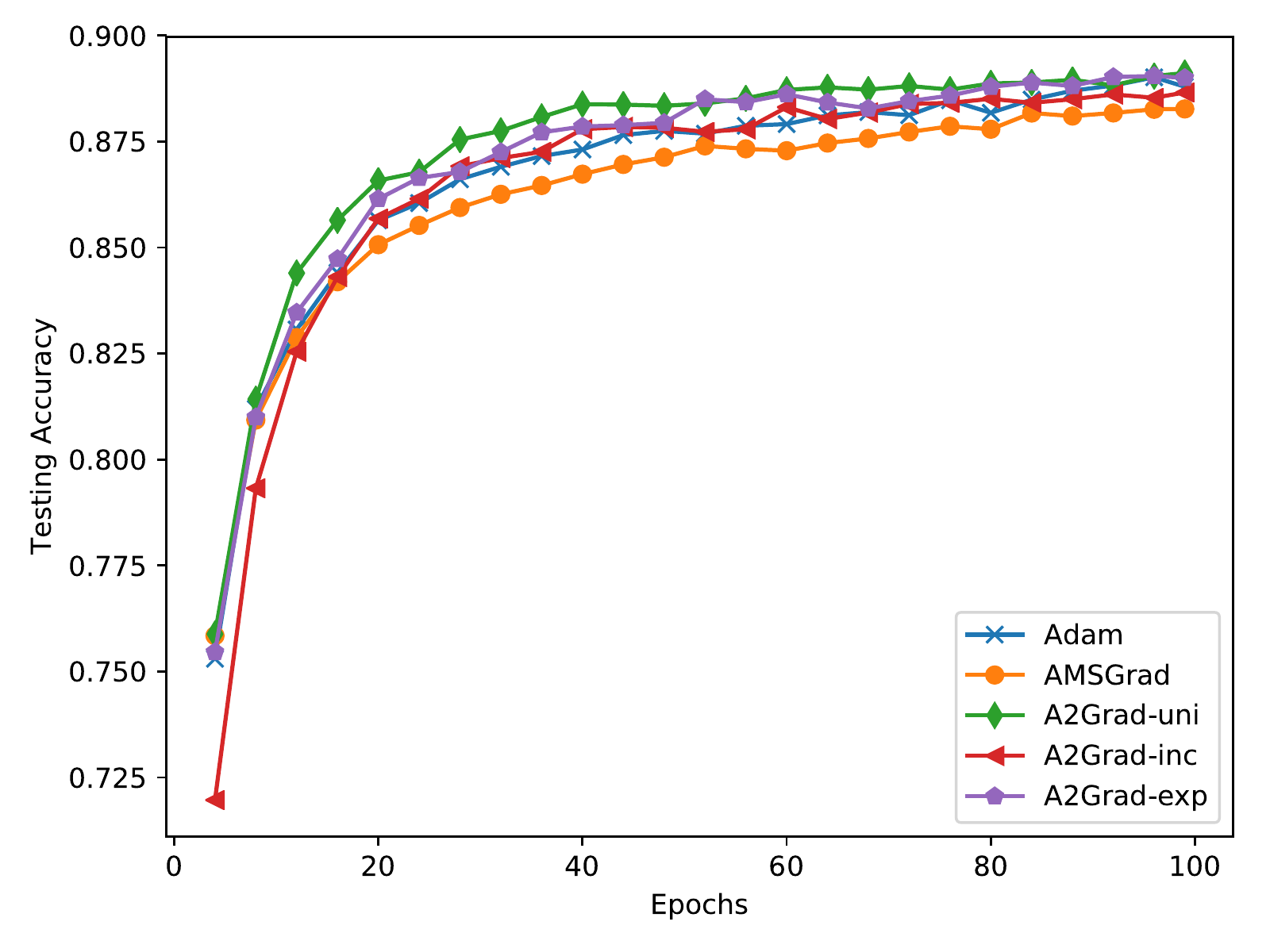}
\caption{}
\label{fig:vgg test acc}
\end{subfigure}
\caption{ Comparison of algorithm performance on convex and nonconvex models. 
From left to right, each column represents the training loss, training accuracy and test accuracy. The first and second row plot the experimental results for logistic regression and neural network on the \texttt{MNIST} dataset respectively. The third and fourth rows plot the results for \textsc{Cifarnet} and \textsc{Vgg16} on \texttt{CIFAR10}  respectively.}
\label{fig:experiment}
\end{figure}

\section{Conclusion\label{sec:Conclusion}}

This paper develops a new framework of accelerated stochastic gradient methods with new adaptive gradients and moving average schemes. 
The primary goal is to tackle the issue of existing adaptive methods and develop more efficient adaptive accelerated methods.
In contrast to the earlier work on adaptive methods, we provide novel analysis to decompose the complexity contributed by adaptive gradients and Nesterov's momentum. 
Our theory gives new insight to the interplay of adaptive gradient and momentum and inspires us to design new adaptive diagonal function.
By choosing adaptive function properly, we further develop new schemes of moving average---incorporating uniform average and exponential average but providing more---with unified theoretical convergence analysis.
Our proposed algorithms not only achieve the optimal worst case complexity rates with respect to both the momentum part and stochastic part, but also demonstrate their empirical advantage over state of the art adaptive methods through experiments on both convex and nonconvex problems. 

\newpage
\medskip

{

\bibliographystyle{plain}
\bibliography{arxiv_sgd.bib}

\begin{thebibliography}{10}

\bibitem{RN334}
N.~Agarwal, Z.~Allen-Zhu, B.~Bullins, E.~Hazan, and T.~Ma.
\newblock Finding approximate local minima faster than gradient descent.
\newblock In {\em Proceedings of the 49th Annual ACM SIGACT Symposium on Theory
  of Computing}, pages 1195--1199. ACM, 2017.

\bibitem{RN323}
Y.~Carmon, J.~C. Duchi, O.~Hinder, and A.~Sidford.
\newblock Accelerated methods for non-convex optimization.
\newblock {\em arXiv preprint arXiv:1611.00756}, 2016.

\bibitem{RN314}
T.~Dozat.
\newblock Incorporating nesterov momentum into adam.
\newblock In {\em International Conference on Learning Representations}, 2016.

\bibitem{RN78}
J.~C. Duchi, E.~Hazan, and Y.~Singer.
\newblock Adaptive subgradient methods for online learning and stochastic
  optimization.
\newblock {\em The Journal of Machine Learning Research (JMLR)}, 12:2121--2159,
  2011.

\bibitem{RN333}
R.~Ge, F.~Huang, C.~Jin, and Y.~Yuan.
\newblock Escaping from saddle points-online stochastic gradient for tensor
  decomposition.
\newblock In {\em Conference on Learning Theory}, pages 797--842, 2015.

\bibitem{RN97}
S.~Ghadimi and G.~Lan.
\newblock Accelerated gradient methods for nonconvex nonlinear and stochastic
  programming.
\newblock {\em Mathematical Programming}, 156(1-2):59--99, 2016.

\bibitem{RN309}
P.~Jain, S.~M. Kakade, R.~Kidambi, P.~Netrapalli, and A.~Sidford.
\newblock Accelerating stochastic gradient descent.
\newblock {\em arXiv preprint arXiv:1704.08227}, 2017.

\bibitem{RN335}
N.~S. Keskar and R.~Socher.
\newblock Improving generalization performance by switching from adam to sgd.
\newblock {\em arXiv preprint arXiv:1712.07628}, 2017.

\bibitem{RN307}
R.~Kidambi, P.~Netrapalli, P.~Jain, and S.~M. Kakade.
\newblock On the insufficiency of existing momentum schemes for stochastic
  optimization.
\newblock {\em arXiv preprint arXiv:1803.05591}, 2018.

\bibitem{RN321}
D.~P. Kingma and J.~Ba.
\newblock Adam: A method for stochastic optimization.
\newblock {\em International Conference on Learning Representations}, 2015.

\bibitem{RN339}
G.~Lan.
\newblock {\em Convex optimization under inexact first-order information}.
\newblock Georgia Institute of Technology, 2009.

\bibitem{RN140}
G.~Lan.
\newblock An optimal method for stochastic composite optimization.
\newblock {\em Mathematical Programming}, 133(1-2):365--397, 2012.

\bibitem{RN322}
J.~D. Lee, M.~Simchowitz, M.~I. Jordan, and B.~Recht.
\newblock Gradient descent only converges to minimizers.
\newblock In {\em Conference on Learning Theory}, pages 1246--1257, 2016.

\bibitem{RN176}
A.~Nemirovski, A.~Juditsky, G.~Lan, and A.~Shapiro.
\newblock Robust stochastic approximation approach to stochastic programming.
\newblock {\em SIAM Journal on Optimization}, 19(4):1574--1609, 2009.

\bibitem{RN177}
A.~Nemirovski and D.~B. Yudin.
\newblock {\em Problem complexity and method efficiency in optimization}.
\newblock John Wiley and Sons, 1983.

\bibitem{RN185}
Y.~Nesterov.
\newblock A method of solving a convex programming problem with convergence
  rate $\mathcal{O}(1/k^2)$.
\newblock {\em Soviet Mathematics Doklady}, 27(2):372--376, 1983.

\bibitem{RN184}
Y.~Nesterov.
\newblock {\em Introductory lectures on convex optimization: A basic course},
  volume~87.
\newblock Springer, 2003.

\bibitem{RN326}
Y.~Nesterov and B.~T. Polyak.
\newblock Cubic regularization of newton method and its global performance.
\newblock {\em Mathematical Programming}, 108(1):177--205, 2006.

\bibitem{RN316}
B.~T. Polyak.
\newblock Some methods of speeding up the convergence of iteration methods.
\newblock {\em USSR Computational Mathematics and Mathematical Physics},
  4(5):1--17, 1964.

\bibitem{RN337}
B.~T. Polyak.
\newblock New stochastic approximation type procedures.
\newblock {\em Automat. i Telemekh}, 7(98-107):2, 1990.

\bibitem{RN202}
B.~T. Polyak and A.~B. Juditsky.
\newblock Acceleration of stochastic approximation by averaging.
\newblock {\em SIAM Journal on Control and Optimization}, 30(4):838--855, 1992.

\bibitem{RN308}
S.~J. Reddi, S.~Kale, and S.~Kumar.
\newblock On the convergence of adam and beyond.
\newblock In {\em International Conference on Learning Representations}, 2018.

\bibitem{RN218}
H.~Robbins and S.~Monro.
\newblock A stochastic approximation method.
\newblock {\em The Annals of Mathematical Statistics}, pages 400--407, 1951.

\bibitem{simonyan2015very}
K.~{Simonyan} and A.~{Zisserman}.
\newblock Very deep convolutional networks for large-scale image recognition.
\newblock {\em international conference on learning representations}, 2015.

\bibitem{RN312}
I.~Sutskever, J.~Martens, G.~Dahl, and G.~Hinton.
\newblock On the importance of initialization and momentum in deep learning.
\newblock In {\em International conference on machine learning}, pages
  1139--1147, 2013.

\bibitem{RN320}
T.~Tieleman and G.~Hinton.
\newblock Lecture 6.5-rmsprop: Divide the gradient by a running average of its
  recent magnitude.
\newblock {\em COURSERA: Neural networks for machine learning}, 4(2):26--31,
  2012.

\bibitem{RN332}
N.~Tripuraneni, M.~Stern, C.~Jin, J.~Regier, and M.~I. Jordan.
\newblock Stochastic cubic regularization for fast nonconvex optimization.
\newblock {\em arXiv preprint arXiv:1711.02838}, 2017.

\bibitem{RN249}
P.~Tseng.
\newblock On accelerated proximal gradient methods for convex-concave
  optimization.
\newblock {\em submitted to SIAM Journal on Optimization}, 2008.

\bibitem{RN336}
A.~C. Wilson, R.~Roelofs, M.~Stern, N.~Srebro, and B.~Recht.
\newblock The marginal value of adaptive gradient methods in machine learning.
\newblock In {\em Advances in Neural Information Processing Systems}, pages
  4151--4161, 2017.

\bibitem{RN313}
M.~D. Zeiler.
\newblock Adadelta: an adaptive learning rate method.
\newblock {\em arXiv preprint arXiv:1212.5701}, 2012.

\end{thebibliography}

\newpage
\section*{Appendix}

\subsection*{Proof of Theorems}
Before proving Theorem \ref{thm:main-thm}, we present a version of the three-point Lemma as follows:
\begin{lem}
\label{three-point}
  Let $\phi$ and $\psi$ be two proximal functions and 
  \begin{equation*}
    z = \argmin_{x\in \mathcal{X}} \{\langle g, x\rangle +\alpha D_\phi(y_1, x) +\beta D_\psi(y_2,x)\}.
  \end{equation*}
  Then $\forall x \in \mathcal{X}$, one has
  \begin{dmath*}
    \langle g, z\rangle + \alpha D_\phi(y_1, z) + \beta D_\psi(y_2, z)  \le \langle g, x\rangle   + \alpha D_\phi(y_1, x)
    +\beta D_\psi(y_2, x)
    -  \alpha D_\phi(z, x) - \beta D_\psi(z, x).
  \end{dmath*}
\end{lem}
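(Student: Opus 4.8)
The plan is to derive the inequality from the first-order optimality condition for the minimization problem defining $z$, combined with the classical three-point identity for Bregman divergences. First I would record that, since $\phi$ and $\psi$ are convex and each Bregman divergence $D_\phi(y_1,\cdot)$, $D_\psi(y_2,\cdot)$ is convex in its second argument, the objective $h(x):=\langle g,x\rangle+\alpha D_\phi(y_1,x)+\beta D_\psi(y_2,x)$ is convex over the convex set $\mathcal{X}$. Its minimizer $z$ is therefore characterized by the variational inequality $\langle\nabla h(z),x-z\rangle\ge0$ for all $x\in\mathcal{X}$. Using $\nabla_x D_\phi(y,x)=\nabla\phi(x)-\nabla\phi(y)$, the gradient is $\nabla h(z)=g+\alpha(\nabla\phi(z)-\nabla\phi(y_1))+\beta(\nabla\psi(z)-\nabla\psi(y_2))$, so the optimality condition reads
\[
\langle g,x-z\rangle+\alpha\langle\nabla\phi(z)-\nabla\phi(y_1),x-z\rangle+\beta\langle\nabla\psi(z)-\nabla\psi(y_2),x-z\rangle\ge0.
\]

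The key algebraic ingredient, which I would verify directly from the definition $D_\phi(y,x)=\phi(x)-\phi(y)-\langle\nabla\phi(y),x-y\rangle$, is the three-point identity $\langle\nabla\phi(z)-\nabla\phi(y),x-z\rangle=D_\phi(y,x)-D_\phi(y,z)-D_\phi(z,x)$. Applying this once with the pair $(\phi,y_1)$ and once with $(\psi,y_2)$ and substituting into the optimality inequality yields
\[
\langle g,x-z\rangle+\alpha[D_\phi(y_1,x)-D_\phi(y_1,z)-D_\phi(z,x)]+\beta[D_\psi(y_2,x)-D_\psi(y_2,z)-D_\psi(z,x)]\ge0.
\]
Rearranging so that $\langle g,z\rangle$, $\alpha D_\phi(y_1,z)$, and $\beta D_\psi(y_2,z)$ sit on the left-hand side produces exactly the claimed bound, with the penalty $-\alpha D_\phi(z,x)-\beta D_\psi(z,x)$ appearing on the right.

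This argument is essentially a routine computation and I do not expect a substantive obstacle. The only points requiring care are (i) justifying the variational inequality rather than $\nabla h(z)=0$, which relies on convexity of $h$ and differentiability of the proximal functions $\phi$ and $\psi$ (both guaranteed by the standing assumption that distance generating functions are convex and smooth), and (ii) tracking signs in the three-point identity, since a misplaced sign would corrupt the crucial $-\alpha D_\phi(z,x)-\beta D_\psi(z,x)$ term that later drives the telescoping estimate in Theorem \ref{thm:main-thm}.
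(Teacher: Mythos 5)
Your proof is correct and complete: the variational inequality $\langle\nabla h(z),x-z\rangle\ge 0$ at the constrained minimizer, combined with the Bregman three-point identity $\langle\nabla\phi(z)-\nabla\phi(y),x-z\rangle=D_\phi(y,x)-D_\phi(y,z)-D_\phi(z,x)$ applied to each of the two divergences, gives exactly the stated bound after rearrangement. The paper states this lemma without proof, and your argument is precisely the standard one it implicitly relies on, so there is nothing to compare against or correct.
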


\main*
\begin{proof}
First let us denote $g_{k}=\nabla f(\underline{x}_{k})$ for simplicity, we have
\begin{align}
  f(\bar{x}_{k+1}) 
  & 
  \le f(\underline{x}_{k})+\left\langle g_k,\bar{x}_{k+1}-\underline{x}_{k}\right\rangle +\frac{L}{2}\|\bar{x}_{k+1}-\underline{x}_{k}\|^{2} \nonumber \\
  &
  = f(\underline{x}_{k})+(1-\alpha_{k})\left\langle g_k,\bar{x}_{k}-\underline{x}_{k}\right\rangle \nonumber\\
  &
  +\alpha_{k}\left\langle g_k,x_{k+1}-\underline{x}_{k}\right\rangle +\frac{L\alpha_{k}^{2}}{2}\|x_{k+1}-x_{k}\|^{2}\nonumber\\
  &
  \le(1-\alpha_{k})f(\bar{x}_{k})+\alpha_{k}[f(\underline{x}_{k})+\left\langle \underline{G}_{k},x_{k+1}-\underline{x}_{k}\right\rangle ]\nonumber\\
  &
  +\frac{L\alpha_{k}^{2}}{2}\|x_{k+1}-x_{k}\|^{2}-\alpha_{k}\left\langle \delta{}_{k},x_{k+1}-\underline{x}_{k}\right\rangle  \label{first-step}
\end{align}
In light of Lemma \ref{three-point}, we obtain the mirror descent step:
\begin{dmath}
\left\langle \underline{G}_{k},x_{k+1}-\underline{x}_{k}\right\rangle 
\le\left\langle \underline{G}_{k},x-\underline{x}_{k}\right\rangle
 +\gamma_{k}\{D(x_{k},x)-D(x_{k+1},x)
 -D(x_{k},x_{k+1})\} +\beta_{k}\{D_{\phi_{k}}(x_{k},x)
 -D_{\phi_{k}}(x_{k+1},x) -D_{\phi_{k}}(x_{k},x_{k+1})\}  
 \label{eq:main-middle-1}
\end{dmath}
Next applying convexity of $f(x)$, we have 
\begin{align}
f(\underline{x}_{k})+\left\langle \underline{G}_{k},x-\underline{x}_{k}\right\rangle  
&
=f(\underline{x}_{k})+\left\langle g_{k},x-\underline{x}_{k}\right\rangle +\left\langle \delta_{k},x-\underline{x}_{k}\right\rangle\nonumber\\
&
\le f(x)+\left\langle \delta_{k},x-\underline{x}_{k}\right\rangle . 
\label{eq:main-middle-2}
\end{align}

Putting (\ref{first-step}), \eqref{eq:main-middle-1} 
and \eqref{eq:main-middle-2} together, we arrive at
\begin{align*}
f(\bar{x}_{k+1}) 
 & 
 \le(1-\alpha_{k})f(\bar{x}_{k})+\alpha_{k}f(x)
 -\frac{1}{2}(\alpha_{k}\gamma_{k}-L\alpha_{k}^{2})\|x_{k+1}-x_{k}\|^{2}\\
 & 
 \quad +\alpha_{k}\gamma_{k}\{D(x_{k},x)-D(x_{k+1},x)\}
 +\alpha_{k}\beta_{k}\left\{ D_{\phi_{k}}(x_{k},x)-D_{\phi_{k}}(x_{k+1},x)\right\} \\
 & 
  \quad +\alpha_{k}\langle\delta_{k},x_{k}-x_{k+1}\rangle-\alpha_{k}\beta_{k}D_{\phi_{k}}(x_{k},x_{k+1})\\
 &
  \quad + \alpha_{k}\left\langle \delta_{k},x-x_{k}\right\rangle \\
 & 
 \le (1-\alpha_{k})f(\bar{x}_{k}) + \alpha_kf(x) 
 -\frac{1}{2}\alpha_k\beta_k\|x_k-x_{k+1}\|^2_{\phi_k}\\
 &
  \quad +\alpha_k\langle\delta_k,x_k-x_{k+1}\rangle + \alpha_k\langle\delta_k,x-x_k\rangle\\
 & 
  \quad+\alpha_{k}\gamma_{k}\{D(x_{k},x)-D(x_{k+1},x)\}
  +\alpha_{k}\beta_{k}\left\{ D_{\phi_{k}}(x_{k},x)-D_{\phi_{k}}(x_{k+1},x)\right\}\\
 &
 \le(1-\alpha_{k})f(\bar{x}_{k}) + \alpha_kf(x)
 -\frac{1}{2}\alpha_k\beta_k\|x_k-x_{k+1}\|^2_{\phi_k}\\
 &
  \quad + \alpha_k\|\delta_k\|_{\phi_{k}^*}\|x_k-x_{k+1}\|_{\phi_k}+\alpha_k\langle\delta_k,x-x_k\rangle\\
 & 
  \quad +\alpha_{k}\gamma_{k}\{D(x_{k},x)-D(x_{k+1},x)\}
  +\alpha_{k}\beta_{k}\left\{ D_{\phi_{k}}(x_{k},x)-D_{\phi_{k}}(x_{k+1},x)\right\}\\
 &
 \le(1-\alpha_{k})f(\bar{x}_{k})+\alpha_{k}f(x)+\alpha_{k}\|\delta_{k}\|_{\phi_{k}*}^{2}/(2\beta_{k})\\
 & 
  \quad +\alpha_{k}\gamma_{k}\{D(x_{k},x)-D(x_{k+1},x)\}
   +\alpha_{k}\beta_{k}\left\{ D_{\phi_{k}}(x_{k},x)-D_{\phi_{k}}(x_{k+1},x)\right\} \\
 & 
  \quad +\alpha_{k}\left\langle \delta_{k},x-x_{k}\right\rangle .
\end{align*}

{\color{red} In the first and second inequalities we use the strong convexity of proximal functions: $D(x,y) \ge \frac{1}{2}\|x-y\|^2$ and  $D_{\phi_k}(x,y) \ge \frac{1}{2}\|x-y\|^2_{\phi_k}$ } and the fact that $-\frac{1}{2}(\alpha_k\gamma_k-L\alpha_k^2)\|x_k-x_{k+1}\|^2 \le 0$ by the relation (\ref{eq:alpha_gamma_bound}); in the third inequality, we apply Cauchy–Schwarz inequality as $\langle\delta_k,x_k-x_{k+1}\rangle \le \|\delta_k\|_{\phi_{k}^*}\cdot\|x_k-x_{k+1}\|_{\phi_k}$; in the last inequality, we use the fact that for $a>0$,  $bx-\frac{a}{2}x^2\le \frac{b^2}{2a}$.

Finally, summing up the above relation for $k=0,1,2,...$ with each side weighted by $\lambda_k$, and using relation (\ref{eq:alpha_gamma_mono}), we have
\begin{dmath}
\lambda_{K}\left[f(\bar{x}_{K+1})-f(x)\right]  
\le(1-\alpha_{0})\left[f(\bar{x}_{0})-f(x)\right] 
+\alpha_{0}\gamma_{0}D(x_{0},x) 
 +\sum_{k=0}^{K}\lambda_{k}[\frac{\alpha_{k}\|\delta_{k}\|_{\phi_{k}*}^{2}}{2\beta_{k}} 
 +\alpha_{k}\left\langle \delta_{k},x-x_{k}\right\rangle +\alpha_{k}R_{k}],  
\label{main-recur2}
\end{dmath}
where $R_{k}=\beta_{k}D_{\phi_k}(x_{k},x)-\beta_{k}D_{\phi_k}(x_{k+1},x).$
\end{proof}
\normalsize

\diagcvg*
\begin{proof} 
One has
\begin{align*}
 & \sum_{k=0}^{K}\lambda_{k}\alpha_{k}\left\{ \beta_{k}D_{\phi_k}(x_{k},x)-\beta_{k}D_{\phi_k}(x_{k+1},x)\right\} \\
{\le} & \sum_{k=1}^{K}\left[\lambda_{k}\alpha_{k}\beta_{k}D_{\phi_k}(x_{k},x)-\lambda_{k-1}\alpha_{k-1}\beta_{k-1}D_{\phi_{k-1}}(x_{k},x)\right]\\
&
+\lambda_{0}\alpha_{0}\beta_{0}D_{\phi_{0}}(x_{0},x)\\
\overset{\text{(\ref{monotone})}}{\le}
& \sum_{k=1}^{K}\sum_{i=1}^{d}\left[\lambda_{k}\alpha_{k}\beta_{k}h_{k, i}-\lambda_{k-1}\alpha_{k-1}\beta_{k-1}h_{k-1,i}\right]B\\
&
+\lambda_{0}\alpha_{0}\beta_{0}\sum_{i=1}^{d}h_{0, i}B\\
\le & \lambda_{K}\alpha_{K}\beta_{K}\sum_{i=1}^{d}h_{K, i}B .
\end{align*}
{\color{red}Moreover,  we have $\lambda_{k}=\frac{(k+1)(k+2)}{2}$ by the definition of $\alpha_k$,    and $\mathbb{E} [\langle \delta_{k},x_{k}-x\rangle ] =0$ by taking expectation w.r.t stochastic gradient $\underbar{G}_k$. }
It then remains to plug in the values of $\alpha_k$ and $\lambda_k$ to (\ref{main-recur2}) to prove our result.
\end{proof}
\normalsize

\subsection*{Proof of Corollaries}
\polyavg*
\normalsize
The proof of Corollary \ref{cor:poly-avg}  is built on Lemma 10 in \cite{RN78}. 
For the sake of convenience, we present this lemma as follows:
\begin{lem}
\label{lem:sum_frac_bound}Let $\{a_{\tau}\}_{0\le\tau\le t}$ be
a sequence of real values, and $a_{:\tau}=[a_{0},a_{1},...,a_{\tau}]^{T}$
, then 
\[
\sum_{\tau=0}^{t}\frac{a_{\tau}^{2}}{\|a_{:\tau}\|_{2}}\le2\|a_{:t}\|_{2}.
\]
\end{lem}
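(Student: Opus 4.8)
The plan is to reduce the claim to a telescoping sum via the partial sums of squares. Introduce $S_\tau := \|a_{:\tau}\|_2^2 = \sum_{j=0}^{\tau} a_j^2$, with the convention $S_{-1}=0$, so that $a_\tau^2 = S_\tau - S_{\tau-1}$ and $\|a_{:\tau}\|_2 = \sqrt{S_\tau}$. The sequence $\{S_\tau\}$ is nondecreasing since $S_\tau - S_{\tau-1} = a_\tau^2 \ge 0$, and the target inequality becomes $\sum_{\tau=0}^{t} \frac{S_\tau - S_{\tau-1}}{\sqrt{S_\tau}} \le 2\sqrt{S_t}$. The heart of the argument is to dominate each summand by a telescoping difference $2(\sqrt{S_\tau} - \sqrt{S_{\tau-1}})$.

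The single per-term inequality I would establish is $\frac{S_\tau - S_{\tau-1}}{\sqrt{S_\tau}} \le 2\bigl(\sqrt{S_\tau} - \sqrt{S_{\tau-1}}\bigr)$, which is just the concavity of $\sqrt{\cdot}$ made explicit. I would rationalize $\sqrt{S_\tau} - \sqrt{S_{\tau-1}} = \frac{S_\tau - S_{\tau-1}}{\sqrt{S_\tau} + \sqrt{S_{\tau-1}}}$ and then use monotonicity $\sqrt{S_{\tau-1}} \le \sqrt{S_\tau}$ to get $\sqrt{S_\tau} + \sqrt{S_{\tau-1}} \le 2\sqrt{S_\tau}$; substituting this denominator bound yields $\sqrt{S_\tau} - \sqrt{S_{\tau-1}} \ge \frac{S_\tau - S_{\tau-1}}{2\sqrt{S_\tau}}$, which is exactly the per-term inequality after rearranging.

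Summing the per-term bound over $\tau = 0,1,\dots,t$ collapses the right-hand side, giving $\sum_{\tau=0}^{t} \frac{a_\tau^2}{\sqrt{S_\tau}} \le 2\sum_{\tau=0}^{t}\bigl(\sqrt{S_\tau} - \sqrt{S_{\tau-1}}\bigr) = 2\bigl(\sqrt{S_t} - \sqrt{S_{-1}}\bigr) = 2\sqrt{S_t} = 2\|a_{:t}\|_2$, using $S_{-1}=0$. This completes the proof. As an alternative I could argue by induction on $t$: the base case $t=0$ reads $|a_0| \le 2|a_0|$, and the inductive step requires precisely $2\sqrt{S_t - a_t^2} + \frac{a_t^2}{\sqrt{S_t}} \le 2\sqrt{S_t}$, which is the same per-term inequality in disguise; I would lead with the telescoping form since it avoids bookkeeping.

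The only delicate point, and the main thing to get right, is the degenerate case where $S_\tau = 0$ for some prefix (i.e. $a_0 = \cdots = a_\tau = 0$), which makes the summand the indeterminate $0/0$. I would handle this by adopting the standard convention that any term with $a_\tau = 0$ contributes $0$ to the sum (equivalently, restricting the summation to indices $\tau$ with $S_\tau > 0$); such terms also satisfy the per-term inequality with both sides equal to zero, so neither the telescoping nor the final bound is affected, and the estimate $\sum_{\tau=0}^{t} \frac{a_\tau^2}{\sqrt{S_\tau}} \le 2\|a_{:t}\|_2$ holds as stated.
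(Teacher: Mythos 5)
Your proof is correct, and it is essentially the standard argument: the paper itself states this lemma without proof, citing Lemma~10 of the \textsc{AdaGrad} paper \cite{RN78}, whose inductive proof rests on exactly the per-term concavity inequality $\frac{S_\tau - S_{\tau-1}}{\sqrt{S_\tau}} \le 2\bigl(\sqrt{S_\tau}-\sqrt{S_{\tau-1}}\bigr)$ that you establish and telescope (and you correctly observe the induction is this same inequality in disguise). Your explicit handling of the degenerate $0/0$ prefix case is a welcome touch of rigor that the cited source glosses over.
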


\begin{proof}[Proof of Corollary \ref{cor:poly-avg}]
Define $\bar{\delta}_k=(k+1)^{q/2}\delta_k$. Using Lemma \ref{lem:sum_frac_bound}, we arrive at
\begin{align}
\sum_{k=0}^{K}\frac{(k+1)\|\delta_{k}\|_{h_{k}*}^{2}}{\beta_{k}} 
& =\sum_{k=0}^{K}\sum_{i=1}^{d}\frac{(k+1)\delta_{k,i}^{2}}{\beta h_{k,i}}\nonumber\\
 & =\sum_{k=0}^{K}\sum_{i=1}^{d}\frac{(k+1)^{1-q/2}\bar{\delta}_{k,i}^{2}}{\beta \|\bar{\delta}_{:k,i}\|_2}\nonumber\\
 & \le\frac{2(K+1)^{1-q/2}}{\beta}\sum_{i=1}^{d}\|\bar{\delta}_{:k,i}\|. \label{cor-middle-01}
\end{align}
Moreover, by the definition of $\bar{\delta}$, we have
\begin{equation}
  \|\bar{\delta}_{:k,i}\|_2=\sqrt{\sum_{\tau=0}^k (\tau+1)^{q}\delta_{\tau,i}^2}\le (k+1)^{q/2}\|\delta_{:k,i}\|_2,\ 1\le i\le d. \label{cor-middle-02}
\end{equation}
This also gives rise to the bound $h_{k,i}\le \|\delta_{0:k,i}\|_2$.
Putting these together, we have
\begin{dmath*}
\mathbb{E} \left[f(\bar{x}_{K+1})-f(x)\right] 
 \le\frac{2L\|x-x_{0}\|^{2}}{(K+1)(K+2)}
 +\frac{2\beta B\ \mathbb{E}\left[\sum_{i=1}^{d}\|\delta_{0:K,i}\|\right]}{K+2}
 +\frac{2\mathbb{E}\left[\sum_{i=1}^{d}\|\delta_{0:K,i}\|\right]}{\beta(K+2)}
\end{dmath*}
\end{proof}
\normalsize

\expavg*
\begin{proof} Using the definition of $h_k, v_k$, we have 
$$h_k\ge (1-\rho)\sqrt{k+1}\delta_k,\quad k=0,1,2,...$$
and 
$$h_{k,i}\le \sqrt{k+1}\max_{0\le\tau\le k}\left|\delta_{\tau,i}\right|,\quad i=1,2,...,d,\ k=0,1,2,...$$
\begin{align}
\sum_{k=0}^{K}\frac{\lambda_{k}\alpha_{k}\|\delta_{k}\|_{\phi_{k}*}^{2}}{2\beta_{k}} 
& \le \sum_{k=0}^{K}\sum_{i=1}^{d}\frac{\sqrt{k+1}\left|\delta_{k,i}\right|}{2\beta (1-\rho)}\nonumber\\
 & \le\frac{\sqrt{K+1}}{2\beta (1-\rho)}\sum_{i=1}^{d}\|\delta_{:k,i}\|_1. \label{cor-middle-011}
\end{align}
  
In conclusion, we have 
\begin{dmath*}
\mathbb{E}\left[f(\bar{x}_{K+1})-f(x)\right]  
\le
\frac{2L\|x-x_{0}\|^{2}}{(K+1)(K+2)}
+\frac{2\beta B\sum_{i=1}^{d}\mathbb{E}\left[\max_{0\le k\le K}\left|\delta_{k,i}\right|\right]}{\sqrt{K+2}}
+\frac{\sum_{i=1}^{d}\mathbb{E}\left[\|\delta_{0:K,i}\|_1\right]}{2\beta(1-\rho)\sqrt{K+1}(K+2)}.
\end{dmath*}

For the second part, it suffices to show the expectation of some max-type random variables. 
For brevity, we use $\delta_{k, i}$  $\delta_{i}$ exchangeably.
Under the sub-Gaussian assumption, the moment generating function of $\delta_{i}$---$M_t$ satisfies
$$M_t=\mathbb{E}e^{t |\delta_i|}\le 2 e^{t^2 \bar{\sigma}^2/2},$$
for any $t>0$.  Using some standard analysis, we have
\begin{align*}
\exp\mathbb{E}\left[t\max_{0\le k\le K}\left|\delta_{k,i}\right|\right] 
& \le\mathbb{E}\left[\exp\left(t\max_{0\le k\le K}\left|\delta_{k,i}\right|\right)\right] \\
& =\mathbb{E}\left[\max_{0\le k\le K}\exp \left(t\left|\delta_{k,i}\right|\right)\right]\\
& \le\sum_{0\le k\le K}\mathbb{E}\left[\exp t\left|\delta_{k,i}\right|\right] \\
& =(K+1)M_{t}
\end{align*} 
where the first inequality is from Jensen's inequality.
Taking logarithm on both sides yields 
$$\mathbb{E}\left[\max_{0\le k\le K}\left|\delta_{k,i}\right|\right]\le\frac{\log(2(K+1))+t^2\bar{\sigma}^2/2}{t},\quad \forall t. $$ 
Choosing $t=\frac{\sqrt{2\log(2(K+1))}}{\bar{\sigma}}$, 
we have $$\mathbb{E}\left[\max_{0\le k\le K}\left|\delta_{k,i}\right|\right]
\le \sqrt{2\log(2(K+1))} \bar{\sigma}.
$$
Moreover, by sub-Gaussian property, we have $\mathbb{E}|\delta_i|\le\bar{\sigma}\sqrt{2\pi}$.
Overall, we have 
\begin{dmath*}
\mathbb{E}\left[f(\bar{x}_{K+1})-f(x)\right] 
= \frac{2L\|x-x_0\|^2}{(K+1)(K+2)}
+ 2\beta B\frac{\sqrt{2\log(2(K+1))}\bar{\sigma}}{\sqrt{K+2}}
+\frac{\sqrt{2\pi}d \bar{\sigma}}{2\beta(1-\rho)\sqrt{K+2}} .
\end{dmath*}
For the last part, by assuming $\|G(x, \xi)\|_\infty \le C$ we will have $\|\delta_k\|_\infty\le 2C$, $0\le k \le K$. Immediately, we draw the conclusion by plugging into (\ref{eq:converge-exp-avg}).
\end{proof}
\normalsize

\end{document}